\documentclass[unnumsec,webpdf,contemporary,large]{oup-authoring-template}%

\usepackage{booktabs}
\usepackage[ruled,vlined,linesnumbered,algo2e]{algorithm2e}
\usepackage{comment}
\usepackage{tikz}
\usepackage{caption}
\usepackage{tabularx}

\usepackage{amsthm}
\newtheorem{assumption}{Assumption}
\newtheorem{lemma}{Lemma}
\usetikzlibrary{arrows.meta, positioning}

\tikzset{
  node style/.style={circle, draw, minimum size=1cm, inner sep=0pt},
  arrow style/.style={-{Stealth}, thick},
  dashed arrow/.style={arrow style, dashed},
  bidir/.style={<->, thick, dashed, bend left=40}
}
\graphicspath{{Fig/}}

\usepackage[mathlines, switch]{lineno}

\theoremstyle{thmstyleone}%
\newtheorem{theorem}{Theorem}
\theoremstyle{thmstyletwo}%
\newtheorem{remark}{Remark}%
\theoremstyle{thmstylethree}%
\newtheorem{definition}{Definition}
\begin{document}

\journaltitle{Bioinformatics}
\DOI{DOI HERE}
\copyrightyear{2026}
\pubyear{2026}
\access{Advance Access Publication Date: Day Month Year}
\appnotes{Paper}

\firstpage{1}


\title[Causal ASCEND]{ Causal ASCEND: Scalable Two-tier Causal Discovery on High Dimensional Multi-omics Data}

\author[1,$\ast$]{Stephen Asiedu \ORCID{0009-0005-8885-5381}}
\author[1]{David Watson}

\authormark{Asiedu et al.}

\address[1]{\orgdiv{Department of Informatics}, \orgname{King's College London}, \orgaddress{\street{Bush House, 30 Aldwych}, \postcode{WC2B 4BG}, \state{London}, \country{United Kingdom}}}
\corresp[$\ast$]{Corresponding authors. \href{email:stephen.asiedu@kcl.ac.uk}{stephen.asiedu@kcl.ac.uk}, \href{email:stephen.asiedu@kcl.ac.uk}{david.watson@kcl.ac.uk}}

\newcommand{\indep}{\perp \!\!\! \perp}
\newcommand{\dep}{\not\! \perp\!\!\!\perp}



\abstract{\textbf{Motivation:} \\
Biological systems exhibit a hierarchical structure, characterised by directed flow from upstream regulators to downstream effects. Although this ordering provides a natural scaffold for causal inference, most causal discovery and GRN methods either ignore the tiered organisation or condition on all upstream variables, which becomes infeasible for high-dimensional omics data.\\
\vspace{5pt}
\textbf{Results:}\\
We present ASCEND (Ancestral Scalable Causal discovEry via iNherited Descent), a constraint-based framework that leverages known two-tiered structure to enable genome-scale causal discovery. ASCEND introduces a divide-and-conquer strategy that maintains dynamically updated ancestral conditioning sets for each downstream variable, dramatically reducing the number of conditional independence tests required, and achieves polynomial-time complexity where traditional approaches face exponential blow-up. Through extensive simulations and real biological data, we demonstrate that ASCEND accurately recovers ancestral relationships, scales properly and much faster, and outperforms existing gene regulatory network inference methods in both causal precision and computational efficiency. The algorithm's ability to resolve directionality makes it particularly suited for integrating multi-omic data where upstream regulators (e.g., SNPs, methylation sites) and downstream responses (e.g., gene expression) are measured jointly.}
\keywords{Multi-omic integration, Causal discovery, High dimensionality}


\maketitle

\section{Introduction}
The central goal of systems biology is to transition from a descriptive catalogue of molecular components to a functional map of the causal mechanisms governing life \citep{rebai2017causality, Lynch2021The, chevalley2022causalbench, glymour2019review, Ayyanathan2014AssessingCR, Hu2018ApplicationOC}. With the maturation of high-throughput sequencing, we now possess unprecedented multi-omic profiles ranging from genomic variation to transcriptomic and proteomic responses \citep{Manel2016Genomic, He2017Big, Neale2019Gene, Abu-Elmagd2022Editorial:}. However, the sheer dimensionality of these datasets has created a paradox of data richness with little knowledge of the true governing structure \citep{Bates2020Causal}. While we can observe thousands of simultaneous molecular shifts, distinguishing the primary drivers of disease from their downstream effects remains a formidable challenge.

Biological systems are fundamentally hierarchical. This organisation is not a mere byproduct of complexity but is rooted in the central dogma and the directional flow of information from inherited background variables to foreground variables \citep{Danchin2007The, Veenstra2012Metabolomics:, Neale2019Gene}.  For example, genetic variations biologically precede transcriptomic states. In this ``two-tiered" landscape, background variables act as an upstream scaffold that causally precedes foreground variables \citep{watson2022causal}. This inherent causal ordering provides a natural constraint that should drastically simplify the search for Gene Regulatory Networks (GRNs).

Exploiting this structure, however, requires paired multi-omic measurements on the same samples: a stronger data requirement than the ``transcriptomics-only" setting in which most GRN methods operate. Where such
matched background and foreground layers are available, the tiered ordering supplies causal information that association-based methods cannot access; yet
most modern frameworks still do not exploit it. Traditional GRN inference methods frequently treat molecular variables as homogeneous \citep{pratapa2020benchmarking, badia2023gene}. Popular GRN approaches such as information-theoretic methods (e.g., ARACNE \citep{margolin2006aracne}),
tree-based models (e.g., GENIE3 \citep{huynh2010inferring}), recover statistical associations but cannot resolve directionality. eQTL-anchored methods such as TRIGGER \citep{chen2007harnessing} do orient causality from genetic instruments, but require a detectable instrument for every gene and derive their guarantees from genotype randomisation,
conditions rarely met in observational human multi-omic data. Constraint-based causal discovery algorithms \citep{spirtes2001causation, kalisch2007estimating} offer theoretical guarantees but rely on conditional independence (CI) tests whose complexity scales poorly in high-dimensional settings \citep{shah2023causal}. This problem is exacerbated in genomics, where thousands of upstream variables (e.g., SNPs, chromatin features) and thousands of downstream traits coexist, making conditioning on the full background layer computationally prohibitive. While formal causal discovery algorithms \citep{glymour2019review} offer the theoretical rigour to identify directionality, they encounter a ``computational wall" in the omics setting. As the number of variables grows, the complexity of conditional independence (CI) testing grows exponentially \citep{sandor2025efficient, asiedu2025multi, dai2024gene}. Naively conditioning on the entire high-dimensional upstream layer results into scalability issues \citep{shah2023causal}. While some recent methods begin to incorporate biological hierarchy \citep{mao2025halo, watson2022causal, shah2023causal}, a scalable, general framework that leverages this knowledge to make genome-wide causal discovery tractable is still lacking.

\begin{figure*}[t]
\centering
\includegraphics[width=0.75\textwidth]{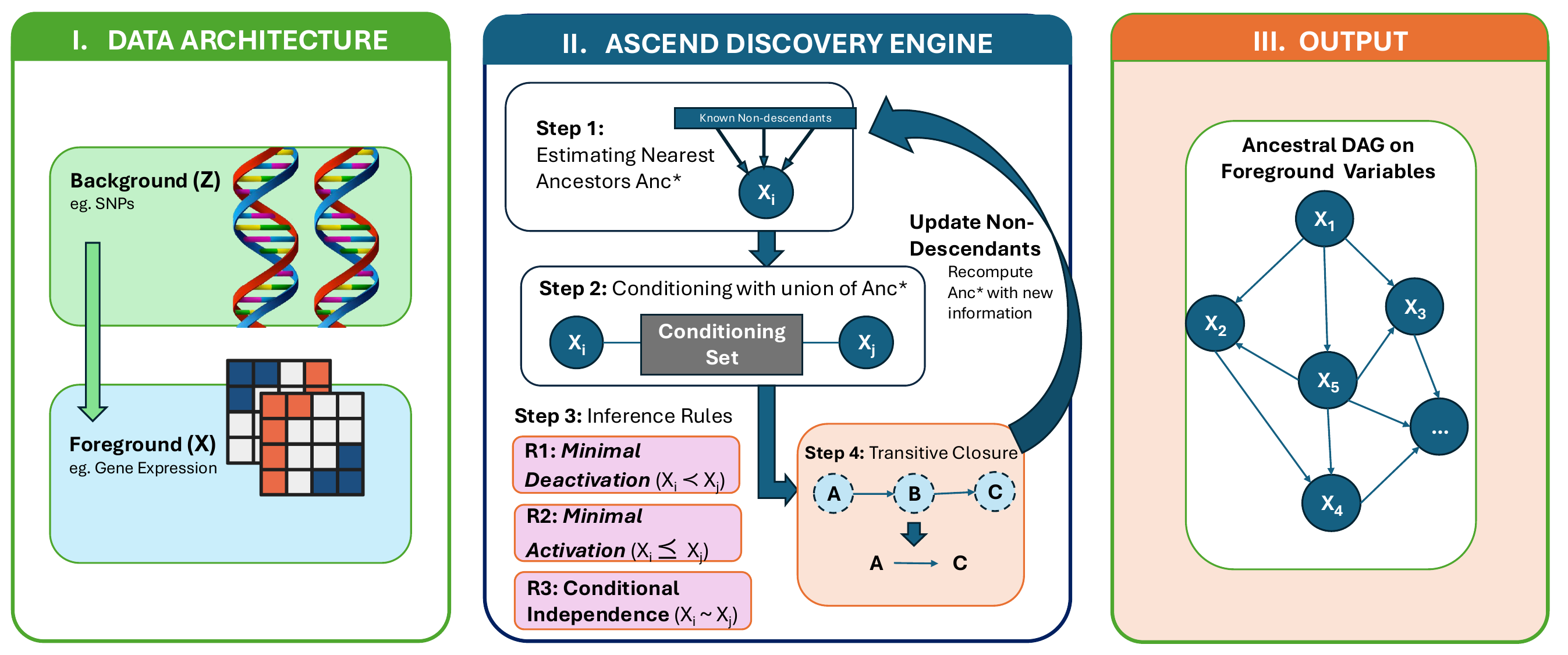}
\caption{\textbf{ASCEND workflow.} (I) The two-tiered data structure where some background $Z$ causally precede foreground $X$. (II) Structure learning: Test pairwise conditional independencies among $\mathbf{X}$ variables while conditioning only on ${Anc}^*(X, \mathcal{T}_{X_i}) \cup {Anc}^*(X, \mathcal{T}_{X_j})$. (III) Final Output after iterations and closure depicting ancestral relationship. Edges may represent direct or indirect causes.}
\label{fig:workflow}
\end{figure*}

To bridge this gap, we introduce ASCEND (Ancestral Scalable Causal discovEry via iNherited Descent), a constraint-based causal discovery framework designed for high-dimensional, two-tiered biological systems. ASCEND assumes a known causal ordering between background and foreground variables, reflecting common biological hierarchies. Our method adopts a divide-and-conquer strategy that decomposes global structure learning into tractable local problems, allowing it to scale efficiently to omics-scale datasets.

Where the earlier TRIGGER \citep{chen2007harnessing} orients causality from genetic instruments and the more recent confounder blanket learner (CBL) \citep{watson2022causal} conditions on the full background set, ASCEND instead maintains and updates a set of nearest ancestors (inherited descent), which we show as a minimally valid conditioning set for conditional independence testing (see Supplementary material for proof), for each foreground variable. This dramatically reduces the number of conditional independence tests, improving efficiency and preserving soundness under standard causal assumptions. By exploiting the known hierarchical structure, ASCEND achieves substantial computational gains over existing two-tiered and GRN inference approaches. Our workflow is summarised in Fig. \ref{fig:workflow}.

We evaluate ASCEND on simulated datasets and real biological data, demonstrating improved computational efficiency and recovery of hierarchical causal relationships. Comparisons against standard GRN inference tools and two-tiered causal methods highlight the advantages of using these biologically plausible conditioning sets for scalable causal discovery. Our results illustrate how leveraging natural biological hierarchies can make genome-scale causal inference feasible, interpretable, and biologically informative.

The remainder of the paper reports benchmarks against GRN inference tools (Section~\ref{sec:bench-synthetic}), applications to real \textit{Drosophila}
and multi-omic data (Section~\ref{sec:drosophila}), and comparisons
against CBL and other causal-discovery baselines
(Sections~\ref{sec:bench-cbl}--\ref{sec:bench-causal}).

\section{Results}
\label{sec:results}

\subsection{Benchmarking on synthetic data}
\label{sec:bench-synthetic}

We begin with simulated data to allow for explicit control of causal structure.
We simulated two-tiered biological systems using the linear-Gaussian data-generating process detailed in Supplementary Section~S\ref{sec:simulation}. Each replicate generated 20 background variables ($\mathbf{Z}$) and 15 foreground variables ($\mathbf{X}$); upstream-to-downstream cross-tier edges were drawn with probability $0.20$ and within-tier edges from an Erd\H{o}s--R\'enyi DAG controlled by a sparsity parameter $sp$. We swept twelve conditions formed by the Cartesian product of $n \in \{1000, 2000\}$ samples,
$sp \in \{0.5,\, 0.7,\, 0.9\}$, and signal strength $R^2 \in \{0.5,\, 0.7\}$, running $50$ independent replicates per cell. All methods received the same input matrix $(\mathbf{Z} \cup \mathbf{X})$ and
were evaluated against the same ancestral skeleton ground truth derived from the simulator's DAG.

We compared ASCEND against three widely used GRN inference methods spanning distinct methodological families: random forest regression (GENIE3 \citep{huynh2010inferring}), information-theoretic (ARACNe \citep{margolin2006aracne}), and weighted correlation networks (WGCNA \citep{langfelder2008wgcna}). For each method we report the area under the precision-recall curve (AUPR), which is the appropriate ranking metric under the severe class imbalance typical of biological networks and is the standard adopted by the BEELINE benchmarking suite \citep{pratapa2020benchmarking}; AUROC; and the F1 score at \emph{matched edge count}, in which competitors are thresholded to claim the same number of edges that ASCEND claims on the same replicate. The matched-K design controls for differences in operating-point selection and removes any threshold-fairness confound. Significance was assessed using one-sided paired Wilcoxon signed-rank tests (paired by replicate), with Benjamini–Hochberg correction across all ASCEND-versus-competitor comparisons within each metric family.

Table~\ref{tab:bench-primary} summarises the primary cell ($n=2000$, $sp=0.9$, $R^2=0.7$) and Figure~\ref{fig:bench-headline} visualises the matched-K F1 across all twelve cells. At the primary cell, ASCEND achieves F1 $=0.589 \pm 0.023$ versus the next-best $0.358 \pm 0.020$ (GENIE3), a $64\%$ relative improvement, with precision $0.808$ versus $0.471$ at the same edge count. AUPR is correspondingly higher
($0.530 \pm 0.023$ versus $0.428 \pm 0.019$, and q-value $q=0.002$). The F1 advantage is statistically significant in every one of the twelve
conditions tested (all $q<0.01$; Table~\ref{tab:bench-wilcoxon}), with median per-replicate F1 differences ranging from $+0.000$ in the dense, weak-signal regime to $+0.481$ in the sparse, high-signal regime. Across the sparsity sweep, ASCEND's F1 advantage widens as the true graph becomes sparser: when the underlying ancestral structure is sparse, constraint-based testing extracts more signal than continuous edge-weight rankings. This is the regime most relevant to real GRNS, which are known to be sparse \citep{barabasi2004network}.

Among the methods evaluated, only ASCEND outputs orientations. We report the fraction of claimed ancestral edges whose direction agrees with the ground-truth DAG (Table~\ref{tab:bench-primary}, last but one column). At the sparsest cells ($sp=0.9$), ASCEND's direction accuracy reaches $77\%$--$82\%$ ($77.4\% \pm 2.7\%$ at the primary cell), well above the $50\%$ chance baseline in correlational methods. In denser regimes ($sp \in \{0.5,\,0.7\}$) direction accuracy degrades to $58\%$--$65\%$, still above chance but by smaller margins. This pattern reflects a known property of constraint-based causal discovery:
orientation rules rely on conditional-independence patterns that become harder to detect when the underlying graph has many parallel paths. Despite performing repeated conditional-independence testing, ASCEND is substantially faster than GENIE3. Mean wall-clock runtime per replicate at $n=2000$ ranges from $0.30$~s ($sp=0.9$, $R^2=0.5$)
to $0.62$~s ($sp=0.5$, $R^2=0.7$), versus $144$--$527$~s for GENIE3 in the
same cells---a speedup of more than two orders of magnitude (${\sim}230$--$870\times$). WGCNA runs in comparable sub-second time but recovers substantially fewer edges, and ARACNe's runtime scales unfavourably at $n=2000$ (${\gtrsim}110$~s). The runtime gap widens further with sample size; the dynamic ancestor sets used by ASCEND keep its conditioning sets small, so per-replicate cost scales with the sparsity of the discovered structure rather than the full dimensionality of the background
layer.

We highlight two regimes where ASCEND does not dominate. First, in the moderate-signal regime ($R^2=0.5$), ASCEND's AUPR advantage over GENIE3 narrows and is not significant in every cell (e.g.\ $q>0.05$ at $n=1000$, $sp=0.5$ and at $n=2000$, $sp=0.9$), although ASCEND remains significantly higher on F1 throughout. The two methods rank candidate edges comparably well in this regime, but ASCEND's discrete output (a sparse set
of high-precision claims) gives it a substantial F1 advantage at matched thresholds. Second, ASCEND has lower AUROC than GENIE3 at the sparsest,
low-signal cells ($0.72$ versus $0.77$ at $sp=0.9$, $R^2=0.5$), though the two are essentially tied at the primary cell ($0.731$ versus $0.732$). AUROC weights performance against the full ranking rather than the imbalanced positive class, and is known to be optimistic in this setting \citep{saito2015precision}. We retain AUROC in our reporting for completeness but adopt AUPR and matched-K F1 as the primary metrics, consistent with BEELINE.

\begin{table*}[t]
\centering
\caption{\textbf{Primary benchmark cell: $n=2000$, $sp=0.9$, $R^2=0.7$.}
Mean $\pm$ standard error across 50 replicates. Best performance per
metric in bold. Runtime is mean wall-clock
seconds per replicate}
\label{tab:bench-primary}
\small
\begin{tabular}{@{}lcccccc@{}}
\toprule
\textbf{Method} & \textbf{AUPR} & \textbf{AUROC} & \textbf{F1}
  & \textbf{Precision} & \textbf{Dir.\ acc.} & \textbf{Runtime (s)} \\
\midrule
ASCEND  & $\mathbf{0.530 \pm 0.023}$ & $\mathbf{0.731 \pm 0.014}$
        & $\mathbf{0.589 \pm 0.023}$ & $\mathbf{0.808}$
        & $\mathbf{0.774 \pm 0.027}$  & $0.4$ \\
GENIE3  & $0.428 \pm 0.019$          & $\mathbf{0.732 \pm 0.011}$
        & $0.358 \pm 0.020$          & $0.471$
        & N/A                         & $149.2$ \\
ARACNe  & $0.154 \pm 0.009$          & $0.500 \pm 0.000$
        & $0.173 \pm 0.016$          & $0.146$
        & N/A                         & $110.4$ \\
WGCNA   & $0.224 \pm 0.016$          & $0.577 \pm 0.009$
        & $0.219 \pm 0.017$          & $0.250$
        & N/A                         & $0.8$  \\
\bottomrule
\end{tabular}
\end{table*}

\begin{table}[t]
\centering
\caption{\textbf{Full sweep: F1 across all twelve conditions.} Mean F1 $\pm$ standard error across 50 replicates. Bold indicates top performance. ASCEND statistically outperforms all competitors ($q<0.01$).}
\label{tab:bench-f1-full}
\small
\setlength{\tabcolsep}{4pt}
\begin{tabular}{@{}lcccc@{}}
\toprule
\textbf{$sp$ / $n$} & \textbf{ASCEND} & \textbf{GENIE3} & \textbf{ARACNe} & \textbf{WGCNA} \\
\midrule
\multicolumn{5}{l}{\textit{Signal Strength ($R^2 = 0.5$)}} \\
\midrule
$0.5$ / $1000$ & $\mathbf{0.20 \pm 0.02}$ & $0.18 \pm 0.02$ & $0.15 \pm 0.01$ & $0.15 \pm 0.01$ \\
$0.5$ / $2000$ & $\mathbf{0.31 \pm 0.02}$ & $0.28 \pm 0.02$ & $0.25 \pm 0.02$ & $0.26 \pm 0.02$ \\
$0.7$ / $1000$ & $\mathbf{0.29 \pm 0.02}$ & $0.25 \pm 0.02$ & $0.20 \pm 0.02$ & $0.20 \pm 0.02$ \\
$0.7$ / $2000$ & $\mathbf{0.33 \pm 0.02}$ & $0.28 \pm 0.02$ & $0.22 \pm 0.02$ & $0.23 \pm 0.01$ \\
$0.9$ / $1000$ & $\mathbf{0.54 \pm 0.03}$ & $0.39 \pm 0.02$ & $0.13 \pm 0.01$ & $0.21 \pm 0.02$ \\
$0.9$ / $2000$ & $\mathbf{0.56 \pm 0.02}$ & $0.41 \pm 0.02$ & $0.15 \pm 0.02$ & $0.23 \pm 0.02$ \\
\midrule
\multicolumn{5}{l}{\textit{Signal Strength ($R^2 = 0.7$)}} \\
\midrule
$0.5$ / $1000$ & $\mathbf{0.29 \pm 0.02}$ & $0.26 \pm 0.02$ & $0.23 \pm 0.02$ & $0.24 \pm 0.02$ \\
$0.5$ / $2000$ & $\mathbf{0.33 \pm 0.02}$ & $0.30 \pm 0.03$ & $0.28 \pm 0.02$ & $0.28 \pm 0.02$ \\
$0.7$ / $1000$ & $\mathbf{0.33 \pm 0.02}$ & $0.28 \pm 0.02$ & $0.24 \pm 0.02$ & $0.24 \pm 0.02$ \\
$0.7$ / $2000$ & $\mathbf{0.40 \pm 0.02}$ & $0.33 \pm 0.02$ & $0.29 \pm 0.02$ & $0.30 \pm 0.02$ \\
$0.9$ / $1000$ & $\mathbf{0.57 \pm 0.03}$ & $0.30 \pm 0.02$ & $0.19 \pm 0.01$ & $0.21 \pm 0.02$ \\
$0.9$ / $2000$ & $\mathbf{0.59 \pm 0.02}$ & $0.36 \pm 0.02$ & $0.17 \pm 0.02$ & $0.22 \pm 0.02$ \\
\bottomrule
\end{tabular}
\end{table}

\begin{table}[t]
\centering
\caption{\textbf{Paired Wilcoxon signed-rank tests, ASCEND versus each
competitor on F1 at matched edge count.}
One-sided alternative (ASCEND $>$ competitor), paired by replicate,
Benjamini--Hochberg corrected across all 36 comparisons per metric.
Median difference is per-replicate median(ASCEND~F1 $-$ competitor~F1).
ASCEND's F1 is significantly higher than every competitor in every
condition tested. A median difference of $0.000$ with significant test
($n=1000$, $sp=0.5$, $R^2=0.5$ vs GENIE3) indicates that ASCEND wins on
the majority of paired replicates by small margins; the median pair has
zero difference but the rank-sum is significantly shifted toward ASCEND.
$^{***}q<0.001$; $^{**}q<0.01$.}
\label{tab:bench-wilcoxon}
\small
\begin{tabular}{@{}llrrrl@{}}
\toprule
$R^2$ & $sp$ / $n$ & \textbf{vs.\ GENIE3} & \textbf{vs.\ ARACNe}
                  & \textbf{vs.\ WGCNA} & \\
\midrule
\multirow{6}{*}{$0.5$}
& $0.5$ / $1000$ & $+0.000^{***}$   & $+0.058^{***}$  & $+0.033^{***}$ &  \\
& $0.5$ / $2000$ & $+0.022^{***}$   & $+0.066^{***}$  & $+0.068^{***}$ &  \\
& $0.7$ / $1000$ & $+0.036^{***}$   & $+0.101^{***}$  & $+0.090^{***}$ &  \\
& $0.7$ / $2000$ & $+0.043^{***}$   & $+0.120^{***}$  & $+0.100^{***}$ &  \\
& $0.9$ / $1000$ & $+0.160^{***}$   & $+0.427^{***}$  & $+0.314^{***}$ &  \\
& $0.9$ / $2000$ & $+0.174^{***}$   & $+0.471^{***}$  & $+0.308^{***}$ &  \\
\midrule
\multirow{6}{*}{$0.7$}
& $0.5$ / $1000$ & $+0.028^{***}$   & $+0.046^{***}$  & $+0.029^{***}$ &  \\
& $0.5$ / $2000$ & $+0.027^{***}$   & $+0.047^{***}$  & $+0.041^{***}$ &  \\
& $0.7$ / $1000$ & $+0.056^{***}$   & $+0.078^{***}$  & $+0.090^{***}$ &  \\
& $0.7$ / $2000$ & $+0.056^{***}$   & $+0.092^{***}$  & $+0.096^{***}$ &  \\
& $0.9$ / $1000$ & $+0.273^{***}$   & $+0.481^{***}$  & $+0.407^{***}$ &  \\
& $0.9$ / $2000$ & $+0.229^{***}$   & $+0.447^{***}$  & $+0.339^{***}$ &  \\
\bottomrule
\end{tabular}
\end{table}

\begin{figure}
\centering
\includegraphics[width=0.5\textwidth]{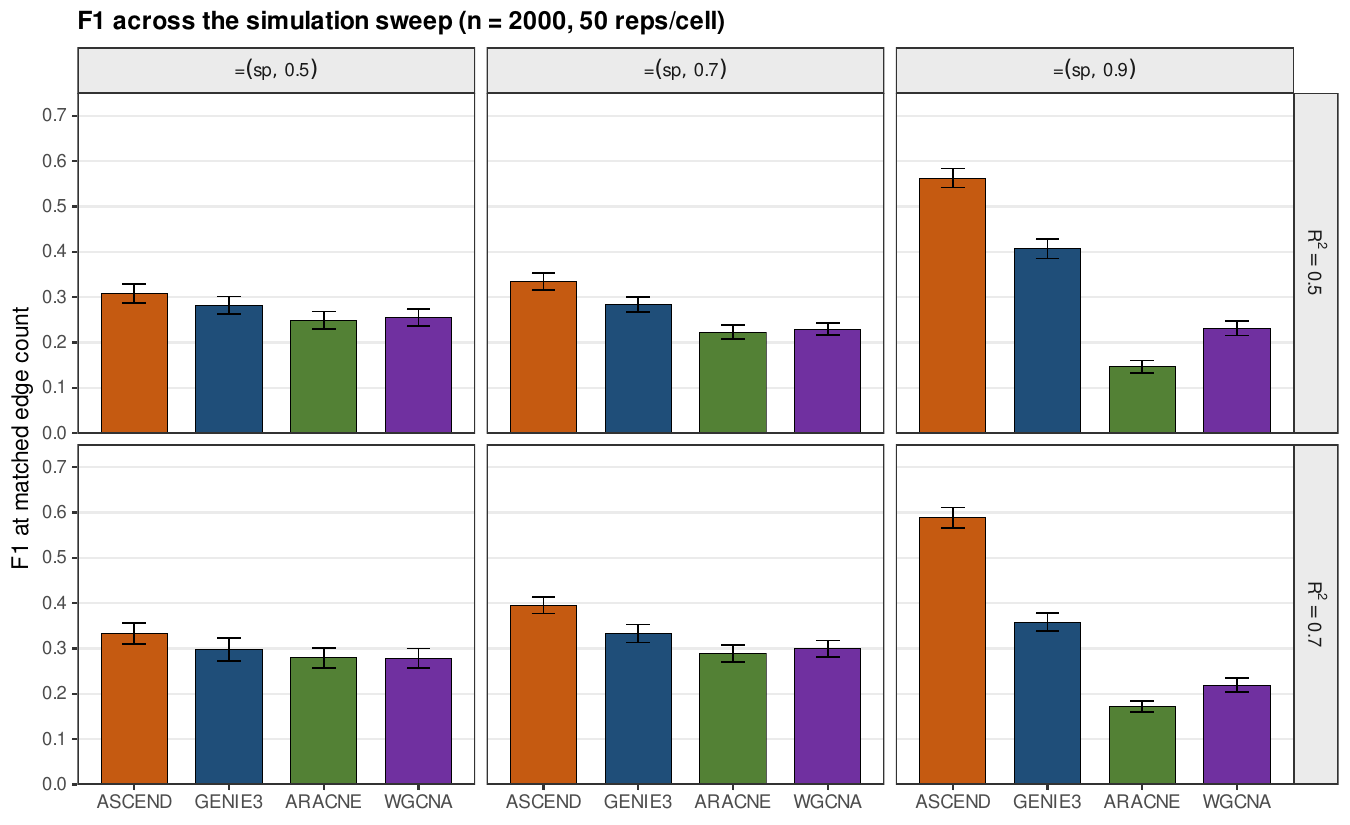}
\caption{\textbf{F1 at matched edge count across the full sweep.}
Mean F1@K over 50 replicates per cell, with rows grouped by signal
strength ($R^2 \in \{0.5,\, 0.7\}$) and columns by graph sparsity
($sp \in \{0.5,\, 0.7,\, 0.9\}$) at $n=2000$. ASCEND attains the highest
F1 in every cell, with the advantage widening as the underlying graph
becomes sparser. Error bars are standard errors across replicates; all
ASCEND-versus-best-competitor differences are significant at
$q<0.01$ by paired one-sided Wilcoxon test
(Table~\ref{tab:bench-wilcoxon}).}
\label{fig:bench-headline}
\end{figure}

\subsection{The Drosophila Genetic Reference Panel (DGRP) Benchmark } \label{sec:drosophila}

To evaluate ASCEND’s performance in ancestral discovery, we benchmarked the framework using the Drosophila Genetic Reference Panel (DGRP), a population-based resource of 200 inbred lines of \textit{Drosophila melanogaster}. These lines originated from a single natural population from Raleigh, North Carolina. Through 20 generations of full-sib mating, these lines have achieved near-total homozygosity, providing a stabilised genetic architecture where molecular phenotypic variation is driven by natural polymorphisms rather than stochastic laboratory mutations. We integrated genomic variants as background $Z$ variables from the DGRP Freeze 2.0 \citep{mackay2012drosophila} with transcriptomic profiles as $X$ from adult males \citep{everett2020gene}, totalling 12806 annotated genes, which represent about $75\%$ of the known Drosophila genome.

\subsubsection{Experimental Design and Standardisation}
We implemented a high-performance preprocessing pipeline to harmonise raw omics matrices for ASCEND’s conditional independence testing framework. The DGRP genotype matrix was subjected to a complete-case filter, retaining only SNPs with 100\% call rates across all 200 lines. We restricted the feature space to common variants (Minor Allele Frequency, $0.2 < MAF < 0.5$), to maintain statistical power and ensure stable covariance estimates.  This eliminates the "sparse data" problem often encountered in high-dimensional discovery, where low-frequency alleles lead to unstable covariance estimates and inflated Type I errors. For the transcriptomic data, we applied a two-step filter to the 18,140 transcribed regions: i) Expression Thresholding: We retained genes with a mean $\log_2(FPKM)$ between 4 and 10 to ensure biological relevance and minimise heteroscedasticity associated with low-abundance transcripts. ii) Variance Selection: We selected the top 250 genes by variance. This selection prioritises the most dynamic regulatory components, those most likely to be under active genetic control (eQTLs). Finally, we aligned the 200 male samples common to both the genomic and transcriptomic datasets to ensure cross-modal consistency.

\subsubsection{Identification of Genomic Causal Hubs}
To transition from statistical associations to a structural understanding of the DGRP transcriptome, we employed ASCEND, which oriented 880 directed causal edges and 61 undirected edges among the selected 250 transcripts, yielding a network density of 3.02\%.  Mapping the causal out-degree of each gene back to its genomic coordinates revealed a non-random regulatory landscape concentrated on chromosome 2R.

The highest-outdegree node was \textit{GNBP-like3} (FBgn0034511, CG13422), located at 2R:20.5\,Mb, with 51 directed downstream edges. \textit{GNBP-like3} encodes a $\beta$-glucan-binding pattern recognition receptor that, together with GNBP3, feeds into Toll pathway activation via a positive-feedback induction loop \citep{lu2024suppression}. Several of the next-highest out-degree nodes are established Toll-pathway effectors: Metchnikowin (\textit{Mtk}, FBgn0014865, out-degree 48), a proline-rich antimicrobial peptide with documented allelic variation under balancing selection \citep{perlmutter2024single}, and \textit{BomBc1} (FBgn0034328, out-degree 27), a member of the ten-gene Bomanin cluster at cytological position 55C whose collective deletion phenocopies a Toll pathway null \citep{clemmons2015effector, lindsay2018short}. \textit{CG10911} (FBgn0034295, out-degree 49) remains functionally uncharacterized and is reported here as a candidate for follow-up. The recovery of a coherent, literature-recognized immune module (a pattern recognition recepter alongside its downstream antimicrobial peptide and Bomanin effectors) from purely population-genetic data is a meaningful positive signal. It indicates that ASCEND is tracking a real axis of genetic variation in this population. Table \ref{tab:regulator_genes} summarises the top 7 regulators discovered by ASCEND and their associated biological function. Fig \ref{fig:manhattan} summarises our discovery. We present full detailing of all the hubs we discovered in the supplementary material. These hubs can be used as prior knowledge for further biological experimentations.

Identifying these hubs moves the analysis beyond a static ``parts list'' of the genome toward a functional, directed model, with several practical consequences. Together, these findings establish a locus as a high-leverage target for experimental follow-up like CRISPR-based perturbation.

\begin{figure}
    \centering
    \includegraphics[width=0.99\linewidth]{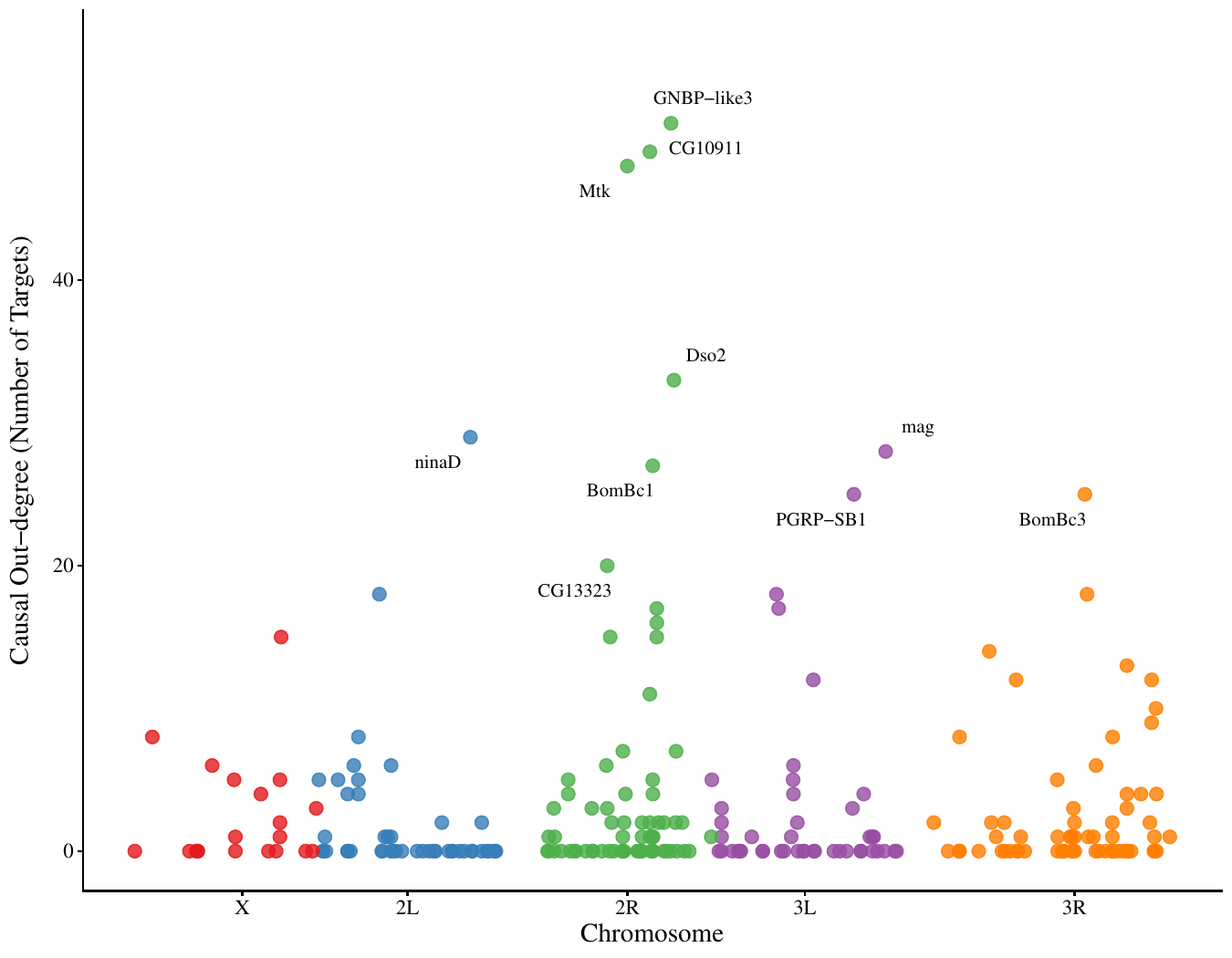}
    \caption{The genomic distribution of causal influence scores. The y-axis represents the causal out-degree (the number of downstream genes controlled by a specific locus).}
    \label{fig:manhattan}
\end{figure}

\begin{table*}[h]
    \centering
    \caption{Top Upstream Master Regulators Discovered by ASCEND and their Chromosomal Distribution}
    \label{tab:regulator_genes}
    \small
    \setlength{\tabcolsep}{5pt}
    \begin{tabularx}{\linewidth}{@{} l c c c >{\raggedright\arraybackslash}X @{}}
        \toprule
        \textbf{ Hub} & \textbf{FlyBase ID} & \textbf{Chr} & \textbf{Out-Degree} & \textbf{Primary Biological Association} \\
        \midrule
        GNBP-like3 & FBgn0034511 & 2R & 51 & Gram-negative pathogen recognition; Toll activation \\
        \addlinespace
        CG10911    & FBgn0034295 & 2R & 49 & Core transcriptomic driver; uncharacterized eQTL \\
        \addlinespace
        Mtk        & FBgn0014865 & 2R & 48 & Metchnikowin; antifungal/antibacterial peptide defense \\
        \addlinespace
        Dso2       & FBgn0067905 & 2R & 33 & Disconnected-subunit-like 2; immune effector response \\
        \addlinespace
        ninaD      & FBgn0002939 & 2L & 29 & Class B scavenger receptor; midgut carotenoid transport \\
        \addlinespace
        mag        & FBgn0036996 & 3L & 28 & \textit{magistrale}; protein kinase acting in tracheal development \\
        \addlinespace
        BomBc1     & FBgn0034328 & 2R & 27 & Bomanin B-core 1; antimicrobial humoral response \\
        \bottomrule
    \end{tabularx}
\end{table*}

\subsection{Comparison with existing directional gene regulatory network inference methods}

To evaluate the performance of ASCEND against established approaches for causal gene regulatory network inference, we compared it with two representative methods: TRIGGER~\citep{chen2007harnessing}, one of the earliest methods to exploit naturally occurring genetic variation for causal network reconstruction, and BFCS (Bayes Factors of Covariance Structures)~\citep{bucur2018bayesian}, a Bayesian causal inference framework that infers directed regulatory interactions using instrumental-variable models. All three methods were applied to the same yeast eQTL dataset distributed with both the TRIGGER and BFCS package, comprising expression measurements for 6,216 genes across 112 segregants together with 3,244 genetic markers.

Unlike ASCEND, which infers causal relationships directly from conditional independence structure, both TRIGGER and BFCS require genetic instruments linked to putative regulators. Consequently, the methods estimate related but not identical causal quantities: ASCEND reconstructs the ancestral causal ordering of the network, whereas TRIGGER and BFCS assign confidence scores to pairwise regulator--target relationships. To ensure a fair comparison, we therefore evaluated both direct regulatory interactions and ancestral (reachability) relationships separately.

\subsubsection{Construction of the benchmark network}

Evaluation was performed against experimentally curated transcriptional interactions obtained from the YEASTRACT+ database \cite{teixeira2023yeastract}. Gene identifiers from the expression dataset, TRIGGER matrices and curated regulatory interactions were first mapped to systematic ORF identifiers to ensure a common gene space. Regulators represented in both the expression data and curated network with at least three measurable downstream targets were retained.

To construct a biologically meaningful benchmark while maintaining sufficient statistical power, we selected the highest-degree curated transcription factors together with their experimentally validated targets, yielding a network of 150 genes containing 14 regulators and 298 curated direct regulatory interactions. Because ASCEND predicts transitive ancestral relationships rather than only direct edges, we additionally computed the transitive closure of the curated network restricted to this induced subgraph, producing a second benchmark consisting of 368 ancestral regulatory relationships. 

\subsubsection{Recovery of curated regulatory relationships}

We used Precision, recall and F1 score against both the direct and ancestral benchmarks, together with per-regulator Jaccard similarity between predicted and curated target sets as shown in Fig \ref{fig:ascend_trigger_bfcs}. ASCEND consistently achieved the highest performance across all evaluation metrics. On the direct-edge benchmark, ASCEND attained a precision of 34.8\%, substantially exceeding both TRIGGER (17.4\%) and BFCS (13.0\%), while also achieving the highest recall and F$_1$ score.

\begin{figure}[t]
\centering
\includegraphics[width=0.5\textwidth]{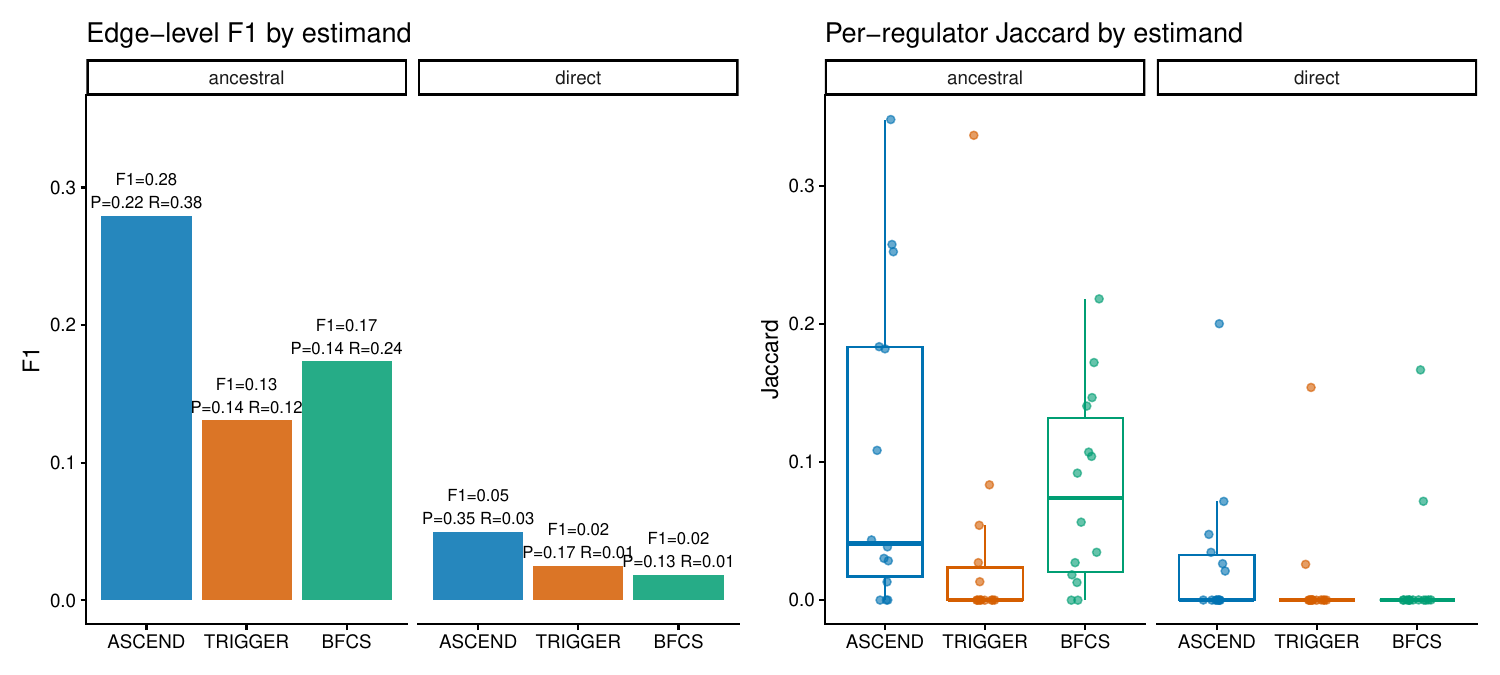}
\caption{Comparison of ASCEND, TRIGGER and BFCS. Left: edge-level precision, recall and F$_1$ scores for direct and ancestral regulatory relationships. Right: per-regulator Jaccard similarity between predicted and curated regulons.}
\label{fig:ascend_trigger_bfcs}
\end{figure}

The advantage was more pronounced on the ancestral benchmark, which more naturally reflects the causal relationships estimated by ASCEND. ASCEND achieved a precision of 22.1\%, a recall of 38.0\%, and an F$_1$ score of 0.279, compared with 0.174 for BFCS and 0.131 for TRIGGER. Thus, ASCEND improved F$_1$ performance by approximately 60\% over BFCS and more than twofold over TRIGGER. Similar trends were observed when comparing regulator-specific target sets: ASCEND produced the highest mean Jaccard similarity on both the direct and ancestral benchmarks.

The three methods showed moderate agreement in the regulatory relationships they recovered. ASCEND shared 192 predicted ancestral interactions with BFCS, compared with 117 shared between BFCS and TRIGGER and 53 shared between ASCEND and TRIGGER, indicating that although ASCEND and BFCS are based on fundamentally different inference principles, they recover a substantial overlapping subset of biologically plausible regulatory relationships.

\section{Comparison against the confounder blanket learner}
\label{sec:bench-cbl}

The closest methodological relative of ASCEND is the confounder blanket learner (CBL) of \citet{watson2022causal}, which also exploits a known two-tiered ordering between background and foreground variables but conditions on the \emph{entire} background layer when testing pairwise ancestral relations. Because both methods address the same problem under the same assumptions, the comparison isolates the contribution of ASCEND's central design choice: replacing the fixed full-background
conditioning set of CBL with dynamically updated nearest-ancestor sets.


We simulated two-tiered systems using the linear Gaussian process of Supplementary Section~S\ref{sec:simulation} and ran both methods on identical replicates. To stress the methods along the axes most relevant to omics applications, we performed one-dimensional sweeps over sample size, foreground
dimension, and background dimension around a default configuration (twelve configurations in total), evaluating five independent random seeds per setting;
exact values appear in Supplementary Section~S\ref{sec:simulation}. To keep the
comparison computationally tractable, we capped each run at a one-hour wall-clock budget; cells where CBL exceeded this budget are reported as conservative lower bounds. We report wall-clock runtime, the number of
conditional independence (CI) tests performed, coverage (the fraction of variable pairs the method commits a label to rather than leaving unresolved), and the F1 score against the true ancestral skeleton.

\subsubsection{Results}

Figure~\ref{fig:bench-cbl} summarises the comparison. ASCEND was significantly faster than CBL in every configuration tested, with speedup factors ranging from $27\times$ at the smallest problem
$(n{=}256,\,d_x{=}5,\,d_z{=}10)$ to over $5{,}000\times$ at $(n{=}1024,\,d_x{=}5,\,d_z{=}50)$. The runtime gap widens with the background dimension: empirical scaling exponents fitted along the
$d_z$ axis are $t \propto d_z^{+0.81}$ for CBL and $t \propto d_z^{-0.45}$ for ASCEND. ASCEND becomes \emph{faster} as the background grows because additional background variables refine the
nearest-ancestor sets and accelerate convergence; CBL becomes slower because the conditioning set widens by construction. At $d_z = 50$, a single CBL replicate did not finish within the one-hour
budget; ASCEND completed the same configuration in $0.27 \pm 0.01$~s.

The computational-work accounting tells the same story at a finer resolution. Across all twelve configurations, ASCEND extracted its output from between 10 and 4,560 conditional independence tests, whereas CBL performed
between 42,000 and 883,400. The factor of $10^2$--$10^3$ in raw conditional independence work, together with the lower per-test cost of ASCEND's F-test relative to CBL's $L_1$-regularised regression with $B=50$ subsamples, accounts
for the observed wall-clock gap.

ASCEND was also more decisive and more accurate. ASCEND resolved $99\% \pm 1\%$ of variable pairs on average, leaving almost none as $\texttt{NA}$, while CBL's stability-selection rule left between $7\%$
and $32\%$ of pairs unresolved (mean $26\%$). On F1 against the true ancestral skeleton, at the smallest problem $(d_x=5, d_z=10, n=1024)$, CBL identified zero true positives across all five seeds and
F1 is therefore undefined. 

\begin{figure*}[t]
\centering
\includegraphics[width=0.65\textwidth]{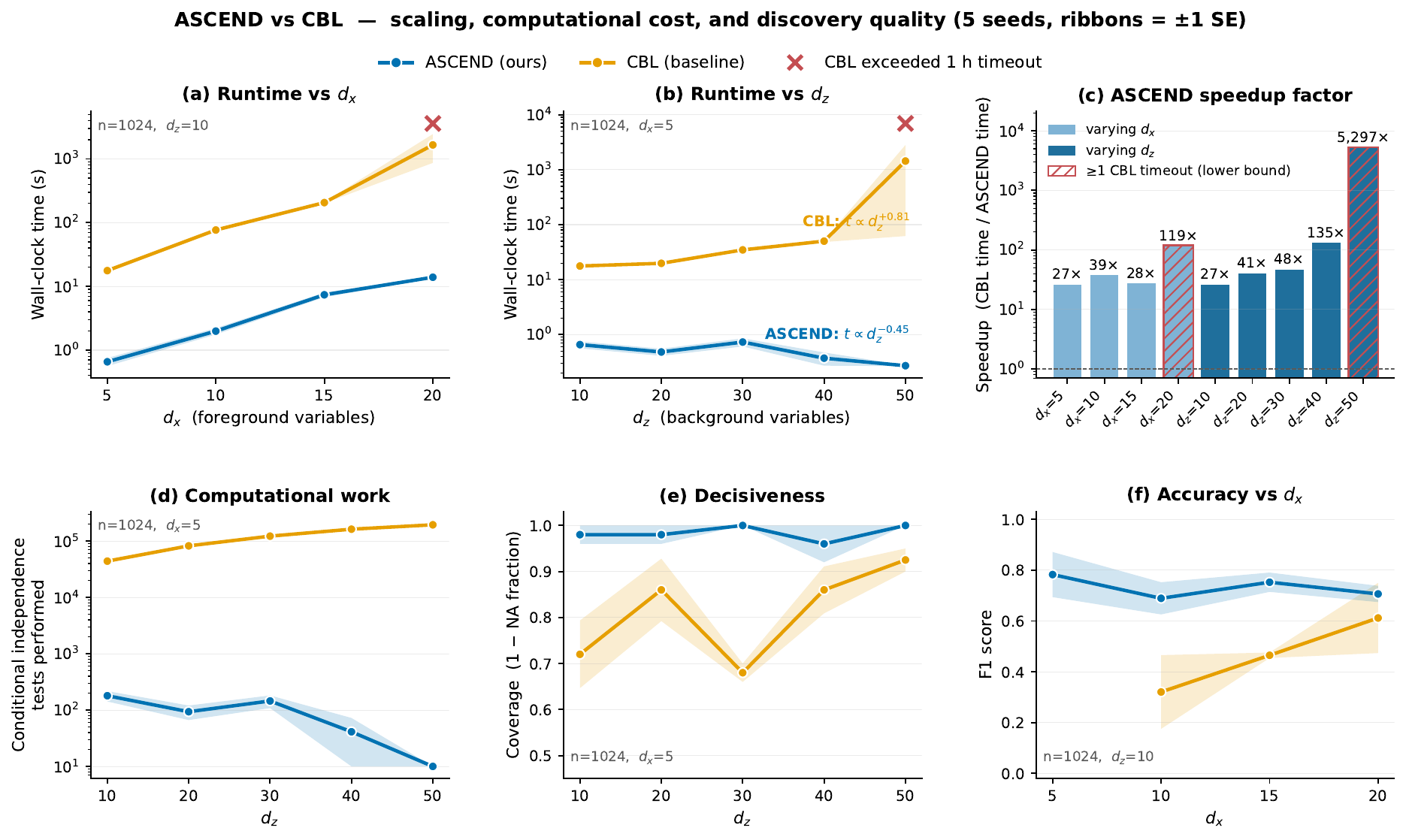}
\caption{\textbf{ASCEND vs. CBL runtime, computational cost, and discovery quality.} Ribbons show $\pm 1$ SEM. 
\textbf{(a--b)} Log wall-clock time with empirical scaling exponents. Red $\times$ marks CBL timeouts ($>1$ hour). 
\textbf{(c)} Speedup factor (CBL / ASCEND); red borders/labels indicate conservative lower bounds due to CBL timeouts. 
\textbf{(d)} Number of conditional independence tests performed. 
\textbf{(e)} Coverage (fraction of resolved variable pairs). 
\textbf{(f)} F1 score against the true ancestral skeleton (CBL is undefined at $d_x = 5$ due to zero true positives).}
\label{fig:bench-cbl}
\end{figure*}

\section{Comparison against causal discovery baselines}
\label{sec:bench-causal}

Having established ASCEND's advantage over standard GRN inference tools (Section~\ref{sec:bench-synthetic}), we now compare it against four causal discovery algorithms that explicitly target ancestral or directional structure: the confounder blanket learner (CBL)~\citep{watson2022causal}, the closest
methodological relative; Greedy Equivalence Search
(GES)~\citep{chickering2002optimal}; the linear non-Gaussian acyclic model (LiNGAM)~\citep{shimizu2006linear}; and the PC algorithm~\citep{spirtes2001causation}. All these methods, just like ASCEND, produces causal information instead of mere association.

\subsubsection{Experimental design}

Using the two-tiered data simulation of Supplementary
Section~S\ref{sec:simulation}, we tested a grid of 81 cells spanning a range of signal strengths ($R^2$), foreground dimensions, background-to-foreground
dimension ratios, and graph sparsities. For each cell we ran sample sizes ranging from 512 to 131{,}072 and up to 20 independent replicates per $(n,\text{cell})$. All methods received the same input matrix $(\mathbf{Z} \cup \mathbf{X})$ and were evaluated against the same true
ancestral skeleton derived from the simulator's DAG. Each method received a one-hour wall-clock budget per replicate. 

\subsubsection{Accuracy and the precision\textendash recall trade-off}

Figure~\ref{fig:bench-causal}a shows F1 against sample size for all five methods on the filtered subset. ASCEND attains the highest mean F1 at every sample size from $n = 512$ to $n = 131{,}072$. GES trails behind ASCEND closely; Figure~\ref{fig:bench-causal}b plots precision against $n$. ASCEND's precision is stable at $0.57 \pm 0.01$ across the well-sampled range of $n$, whereas GES rises monotonically from $0.43$ at $n = 512$ to $0.48$ at $n = 65{,}536$. This pattern reflects a known difference in calibration: ASCEND's constraint-based test is calibrated at all sample sizes, with its operating point set by the conditional independence test threshold ($\alpha = 0.05$) and largely insensitive to $n$. GES uses a Bayesian Information Criterion penalty whose ratio to the data likelihood depends on $\log(n)/n$, so as $n$ grows GES becomes increasingly conservative and asymptotically consistent. The crossing of the two methods' precision curves is therefore expected in the limit of very large $n$; within the sample-size range tested here, ASCEND remains higher than GES throughout. (The right-most data point at $n = 131{,}072$, drawn with open markers in panel b, is based on fewer than $50$ replicates per method and is more
indicative than conclusive.) CBL, LiNGAM and PC all remain below $0.40$ precision throughout.

On paired replicates against GES, ASCEND achieves higher precision on $67\%$ of $2{,}738$ paired runs, with a mean precision gain of $+0.091$ ($p < 10^{-30}$, Wilcoxon). This corresponds to ASCEND producing roughly half as many false positive ancestral edges as GES
per replicate ($16.1$ vs $30.4$ on average). ASCEND trades a small amount of recall for this precision: in paired replicates the mean recall gap is $-0.058$ in GES's favor. We argue that this trade-off favours ASCEND for the intended application of ancestral discovery in molecular networks. Each false-positive ancestral edge is a candidate hypothesis for experimental follow-up, and the cost of investigating a spurious edge dominates the cost of a
missed one in most realistic biological pipelines. Recall is a threshold-tunable property of the conditional-independence test; precision reflects the structural specificity of the method and is harder to recover after the fact.

To control for the differing failure profiles of the five methods, we compared paired F1 differences on the subset of replicates in which both methods succeeded (Figure~\ref{fig:bench-causal}c). Against CBL, LiNGAM and PC, the paired-difference distributions sit clearly to the
right of zero: mean gains of $+0.12$, $+0.07$ and $+0.51$ respectively, all significant at $p < 10^{-20}$ (Wilcoxon, two-sided). Against GES, the distribution straddles zero with a mean difference of $-0.03$; ASCEND and GES are statistically distinguishable at this sample size
(Wilcoxon $p < 10^{-30}$) but the effect is small and the distributions overlap substantially. Practically, ASCEND and GES deliver comparable F1, and the choice between them rests on the other properties measured in panels (b), (d) and (e).

ASCEND completed every replicate it was asked to run, failing in $0\%$ of attempted runs across the entire parameter grid (Figure~\ref{fig:bench-causal}e). LiNGAM's failure rate declined steadily as $n$ grew; CBL's failure rate dipped at intermediate $n$ before rising sharply again at the largest sample size tested. PC failed in $80$--$90\%$ of attempted runs across every $n$, and was included in our paired analysis only at cells where it produced sufficient replicates; this fragility is consistent with the known sensitivity of \texttt{pcalg}'s implementation to near-singular sample covariances at higher dimensions, and we therefore treat the ASCEND-versus-PC comparison as illustrative rather than definitive.

Among methods that produce ancestral output, only ASCEND and CBL emit explicit unresolved verdicts (\texttt{NA}) where the conditioning strategy fails to deliver an answer. GES, LiNGAM and PC always commit to a directed or absent edge; in the case of GES this is achieved by
taking the transitive closure of the inferred CPDAG, which can inflate apparent recall by extending the implications of partially oriented structure. Across the parameter grid, ASCEND committed to a label on $\ge 95\%$ of variable pairs at every sample size
(Figure~\ref{fig:bench-causal}d); CBL's coverage was lower ($75$--$83\%$ across most of the range, rising sharply to $\approx 97\%$ only at the largest sample size), reflecting its more conservative stability-selection rule.

\subsubsection{The biologically relevant regime}

Real gene-regulatory networks are believed to be
sparse~\citep{barabasi2004network}; the dense-graph regime that
challenges all causal-discovery methods is less representative of the
intended application. Restricting attention to the sparse, signal-rich
regime ( Figure~\ref{fig:bench-causal}f), ASCEND attains the highest F1 in every cell of the sparsity sweep, with a consistently large lead over GES across the sparsity levels tested
(F1 advantage of roughly $+0.12$ to $+0.13$, with no clear narrowing at higher sparsity). This pattern mirrors the GRN-baseline comparison of Section~\ref{sec:bench-synthetic}, where ASCEND's advantage was likewise substantial in the sparse regime: when true ancestral structure is sparse, constraint-based testing extracts more signal than score-based density estimation.

\begin{figure*}[t]
\centering
\includegraphics[width=0.85\textwidth]{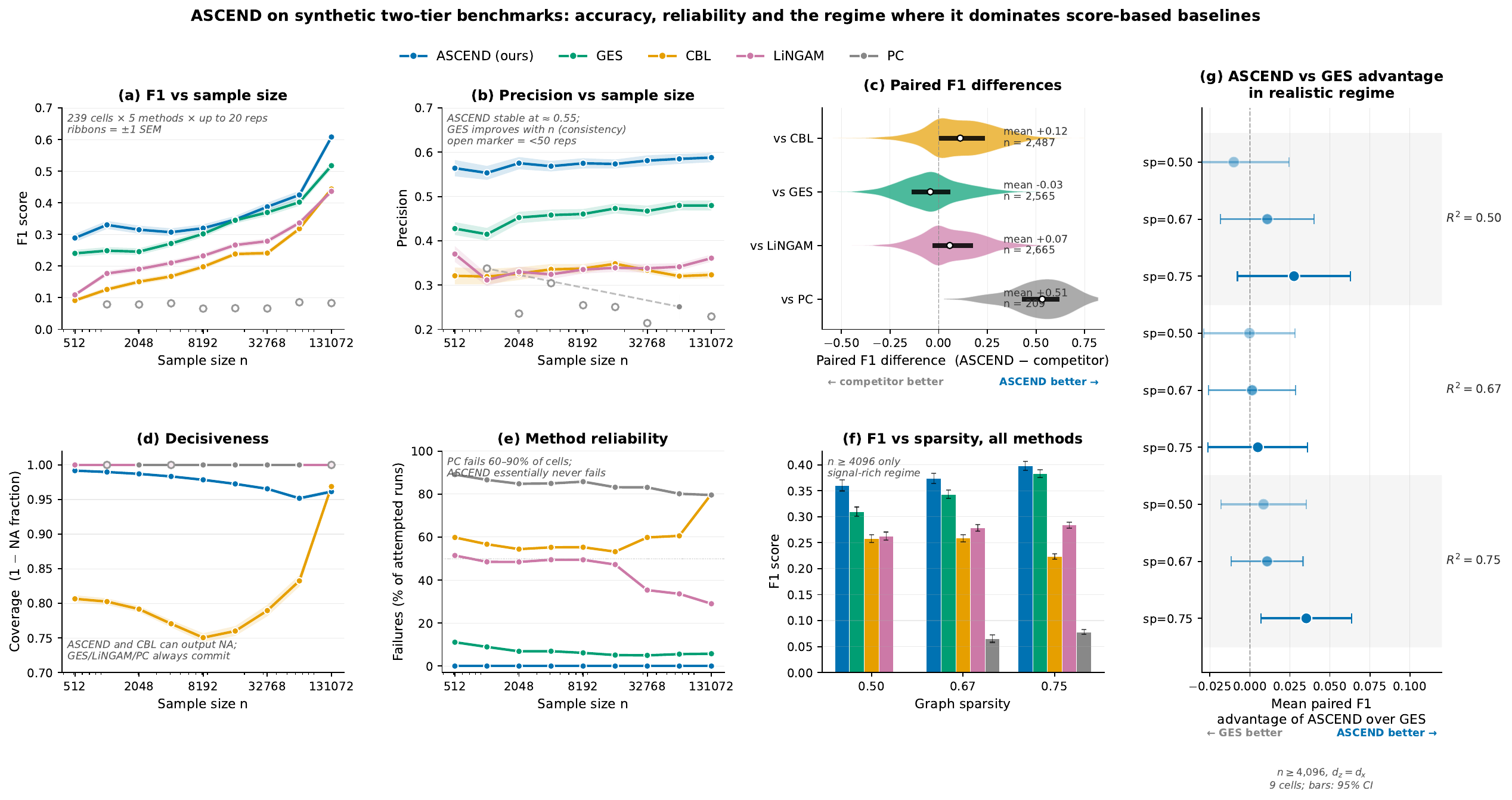}
\caption{\textbf{ASCEND vs. four causal discovery baselines on synthetic benchmarks.} Open markers/dashed lines indicate $<50$ replicates. 
\textbf{(a--b)} F1 score and Precision vs. sample size $n$. 
\textbf{(c)} Distribution of paired F1 differences (ASCEND $-$ competitor) on mutual replicates; white dots show medians, black bars span the IQR. 
\textbf{(d)} Coverage (fraction of foreground pairs labeled). 
\textbf{(e)} Method reliability across the 81-cell grid. 
\textbf{(f)} F1 vs. graph sparsity ($n \ge 4{,}096$). 
\textbf{(g)} Mean paired F1 difference (ASCEND $-$ GES) with 95\% CIs ($n \ge 4{,}096$, $d_z = d_x$) grouped by $R^2$ (darker colors indicate higher sparsity).}
\label{fig:bench-causal}
\end{figure*}

\section{Discussion}\label{sec4}

 We have introduced ASCEND, a constraint-based causal discovery framework that exploits a known two-tiered ordering between background and foreground variables to make ancestral discovery tractable at omics scale in polynomial time. By replacing the full-background conditioning
sets of CBL~\citep{watson2022causal} with dynamically updated nearest-ancestor sets, ASCEND localises each pairwise test to a small, biologically meaningful conditioning set, achieving substantial speedups, high decisiveness and precision; in the parameter regime
representative of real gene-regulatory networks.  In application, ASCEND identified
biologically coherent upstream regulators in the DGRP transcriptome. Beyond its standalone use, ASCEND's ancestral adjacency matrix can prune the candidate DAG space for any constraint-based causal discovery algorithm: The resulting ancestral matrix output can be used to define some relationships are constraints for algorithms like PC. ie ASCEND can also serve as a preprocessor. When 

\textbf{Limitations.} ASCEND's central assumption is a known causal ordering between two tiers of variables. Where this ordering is genuine (genotype $\to$ transcriptome, methylation $\to$ transcript) the assumption is a feature; where it is uncertain or absent, ASCEND is
not the right tool. The Markov-blanket oracle used in our implementation (IAMB with Fisher's Z and a fixed false-discovery rate of $0.05$) is well-calibrated at moderate dimensionality but can become conservative at very high foreground dimensionality with limited sample size. Finally, our synthetic benchmarks use linear-Gaussian structural equations; non-linear or non-Gaussian regulatory relationships will require alternative independence tests, and the score-based GES remains a competitive alternative in dense or weak-signal regimes where local constraint-based testing has less to exploit.

\subsubsection{Conclusion}
ASCEND closes the gap between the theoretical appeal of causal discovery and the practical realities of high-dimensional biological data. By exploiting structural priors that biology readily provides, it delivers ancestral inference at a scale and precision that make it usable as either a standalone discoverer or as a scalable preprocessor for any downstream causal pipeline.

\section{Data Access}
Transcriptomic and genomic data utilized in this study are available through the NCBI Gene Expression Omnibus (GEO) https://www.ncbi.nlm.nih.gov/geo/) under accession number $GSE117850$ and Zenodo (Record $14871341$), respectively. All preprocessed multi-omic matrices and the core analysis code for the ASCEND framework are available on GitHub at https://github.com/SparkAILab/Ascend.

\section{Competing Interests Declared}
The authors declare no competing interests.


\bibliographystyle{abbrvnat}
\bibliography{reference}

@article{Lynch2021The,title={The Meaning of "Cause" in Genetics.},author={Kate E. Lynch},journal={Cold Spring Harbor perspectives in medicine},year={2021},doi={10.1101/cshperspect.a040519}}

@article{chevalley2022causalbench,
  title={Causalbench: A large-scale benchmark for network inference from single-cell perturbation data},
  author={Chevalley, Mathieu and Roohani, Yusuf and Mehrjou, Arash and Leskovec, Jure and Schwab, Patrick},
  journal={arXiv preprint arXiv:2210.17283},
  year={2022}
}

@article{huynh2010inferring,
  title={Inferring regulatory networks from expression data using tree-based methods},
  author={Huynh-Thu, V{\^a}n Anh and Irrthum, Alexandre and Wehenkel, Louis and Geurts, Pierre},
  journal={PloS one},
  volume={5},
  number={9},
  pages={e12776},
  year={2010},
  publisher={Public Library of Science San Francisco, USA}
}

@book{spirtes2001causation,
  title={Causation, prediction, and search},
  author={Spirtes, Peter and Glymour, Clark and Scheines, Richard},
  year={2001},
  publisher={MIT press}
}

@inproceedings{Ayyanathan2014AssessingCR,
  title={Assessing Causal Relationships in Genomics: From Bradford-Hill Criteria to Complex Gene-Environment Interactions and Directed Acyclic Graphs},
  author={Kasirajan Ayyanathan},
  year={2014},
  url={https://api.semanticscholar.org/CorpusID:201870697}
}

@article{Hu2018ApplicationOC,
  title={Application of Causal Inference to Genomic Analysis: Advances in Methodology},
  author={Pengfei Hu and Rong Jiao and Li Jin and Momiao Xiong},
  journal={Frontiers in Genetics},
  year={2018},
  volume={9},
  url={https://api.semanticscholar.org/CorpusID:49652600}
}

@article{rebai2017causality,
  title={Causality in Genomics Studies: Time is ripe for a new Paradigm. Open J Bioinform Biostat 1 (1): 010-014},
  author={Rebai, A},
  journal={Life Sciences Group},
  year={2017}
}

@article{Danchin2007The,title={The extant core bacterial proteome is an archive of the origin of life},author={A. Danchin and Gang Fang and Stanislas Noria},journal={PROTEOMICS},year={2007},volume={7},doi={10.1002/PMIC.200600442}}

@article{Veenstra2012Metabolomics:,title={Metabolomics: the final frontier?},author={T. Veenstra},journal={Genome Medicine},year={2012},volume={4},pages={40 - 40},doi={10.1186/gm339}}

@article{Neale2019Gene,title={Gene Expression and the Transcriptome},author={D. Neale and N. Wheeler},journal={The Conifers: Genomes, Variation and Evolution},year={2019},doi={10.1007/978-3-319-46807-5_6}}

@article{Manel2016Genomic,title={Genomic resources and their influence on the detection of the signal of positive selection in genome scans},author={Stéphanie Manel and Charles Perrier and M. Pratlong and Laurent Abi-Rached and J. Paganini and Pierre Pontarotti and Didier Aurelle},journal={Molecular Ecology},year={2016},volume={25},doi={10.1111/mec.13468}}

@article{He2017Big,title={Big Data Analytics for Genomic Medicine},author={K. He and Dongliang Ge and Max M He},journal={International Journal of Molecular Sciences},year={2017},volume={18},doi={10.3390/ijms18020412}}

@article{Abu-Elmagd2022Editorial:,title={Editorial: Advances in genomic and genetic tools, and their applications for understanding embryonic development and human diseases},author={M. Abu-Elmagd and Mourad Assidi and A. Alrefaei and Ahmed Rebai},journal={Frontiers in Cell and Developmental Biology},year={2022},volume={10},doi={10.3389/fcell.2022.1016400}}

@article{Bates2020Causal,title={Causal inference in genetic trio studies},author={Stephen Bates and Matteo Sesia and C. Sabatti and E. Candès},journal={Proceedings of the National Academy of Sciences of the United States of America},year={2020},volume={117},pages={24117 - 24126},doi={10.1073/pnas.2007743117}}

@inproceedings{margolin2006aracne,
  title={ARACNE: an algorithm for the reconstruction of gene regulatory networks in a mammalian cellular context},
  author={Margolin, Adam A and Nemenman, Ilya and Basso, Katia and Wiggins, Chris and Stolovitzky, Gustavo and Favera, Riccardo Dalla and Califano, Andrea},
  booktitle={BMC bioinformatics},
  volume={7},
  pages={1--15},
  year={2006},
  organization={Springer}
}

@article{kalisch2007estimating,
  title={Estimating high-dimensional directed acyclic graphs with the PC-algorithm.},
  author={Kalisch, Markus and B{\"u}hlman, Peter},
  journal={Journal of Machine Learning Research},
  volume={8},
  number={3},
  year={2007}
}

@article{shimizu2006linear,
  title={A linear non-Gaussian acyclic model for causal discovery.},
  author={Shimizu, Shohei and Hoyer, Patrik O and Hyv{\"a}rinen, Aapo and Kerminen, Antti and Jordan, Michael},
  journal={Journal of Machine Learning Research},
  volume={7},
  number={10},
  year={2006}
}

@inproceedings{watson2022causal,
  title={Causal discovery under a confounder blanket},
  author={Watson, David S and Silva, Ricardo},
  booktitle={Uncertainty in Artificial Intelligence},
  pages={2096--2106},
  year={2022},
  organization={PMLR}
}

@article{chen2007harnessing,
  title={Harnessing naturally randomized transcription to infer regulatory relationships among genes},
  author={Chen, Lin S and Emmert-Streib, Frank and Storey, John D},
  journal={Genome biology},
  volume={8},
  pages={1--13},
  year={2007},
  publisher={Springer}
}

@article{glymour2019review,
  title={Review of causal discovery methods based on graphical models},
  author={Glymour, Clark and Zhang, Kun and Spirtes, Peter},
  journal={Frontiers in genetics},
  volume={10},
  pages={524},
  year={2019},
  publisher={Frontiers Media SA}
}

@article{magliacane2016ancestral,
  title={Ancestral causal inference},
  author={Magliacane, Sara and Claassen, Tom and Mooij, Joris M},
  journal={Advances in Neural Information Processing Systems},
  volume={29},
  year={2016}
}

@article{claassen2012logical,
  title={A logical characterization of constraint-based causal discovery},
  author={Claassen, Tom and Heskes, Tom},
  journal={arXiv preprint arXiv:1202.3711},
  year={2012}
}

@article{chickering2002optimal,
  title={Optimal structure identification with greedy search},
  author={Chickering, David Maxwell},
  journal={Journal of machine learning research},
  volume={3},
  number={Nov},
  pages={507--554},
  year={2002}
}

@inproceedings{entner2013data,
  title={Data-driven covariate selection for nonparametric estimation of causal effects},
  author={Entner, Doris and Hoyer, Patrik and Spirtes, Peter},
  booktitle={Artificial intelligence and statistics},
  pages={256--264},
  year={2013},
  organization={PMLR}
}

@inproceedings{tsamardinos2003algorithms,
  title={Algorithms for large scale Markov blanket discovery.},
  author={Tsamardinos, Ioannis and Aliferis, Constantin F and Statnikov, Alexander R and Statnikov, Er},
  booktitle={FLAIRS},
  volume={2},
  pages={376--81},
  year={2003}
}

@article{pratapa2020benchmarking,
  title={Benchmarking algorithms for gene regulatory network inference from single-cell transcriptomic data},
  author={Pratapa, Aditya and Jalihal, Amogh P and Law, Jeffrey N and Bharadwaj, Aditya and Murali, TM},
  journal={Nature methods},
  volume={17},
  number={2},
  pages={147--154},
  year={2020},
  publisher={Nature Publishing Group US New York}
}

@article{badia2023gene,
  title={Gene regulatory network inference in the era of single-cell multi-omics},
  author={Badia-i-Mompel, Pau and Wessels, Lorna and M{\"u}ller-Dott, Sophia and Trimbour, R{\'e}mi and Ramirez Flores, Ricardo O and Argelaguet, Ricard and Saez-Rodriguez, Julio},
  journal={Nature Reviews Genetics},
  volume={24},
  number={11},
  pages={739--754},
  year={2023},
  publisher={Nature Publishing Group UK London}
}

@article{dai2024gene,
  title={Gene regulatory network inference in the presence of dropouts: a causal view},
  author={Dai, Haoyue and Ng, Ignavier and Luo, Gongxu and Spirtes, Peter and Stojanov, Petar and Zhang, Kun},
  journal={arXiv preprint arXiv:2403.15500},
  year={2024}
}

@article{sandor2025efficient,
  title={Efficient structure learning of gene regulatory networks with Bayesian active learning},
  author={S{\'a}ndor, D{\'a}niel and Antal, P{\'e}ter},
  journal={BMC bioinformatics},
  volume={26},
  number={1},
  pages={150},
  year={2025},
  publisher={Springer}
}

@article{asiedu2025multi,
  title={Multi-omic Causal Discovery using Genotypes and Gene Expression},
  author={Asiedu, Stephen and Watson, David},
  journal={arXiv preprint arXiv:2505.15866},
  year={2025}
}

@inproceedings{shah2023causal,
  title={Causal Discovery and Optimal Experimental Design for Genome-Scale Biological Network Recovery},
  author={Shah, Ashka and Ramanathan, Arvind and Hayot-Sasson, Valerie and Stevens, Rick},
  booktitle={Proceedings of the Platform for Advanced Scientific Computing Conference},
  pages={1--11},
  year={2023}
}

@article{mao2025halo,
  title={HALO: hierarchical causal modeling for single cell multi-omics data},
  author={Mao, Haiyi and Jia, Minxue and Di, Marissa and Valenzi, Eleanor and Cai, Xiaoyu Tracy and Lafyatis, Robert and Zhang, Kun and Benos, Panayiotis V},
  journal={Nature Communications},
  volume={16},
  number={1},
  pages={8892},
  year={2025},
  publisher={Nature Publishing Group UK London}
}

@article{mackay2012drosophila,
  title={The Drosophila melanogaster genetic reference panel},
  author={Mackay, Trudy FC and Richards, Stephen and Stone, Eric A and Barbadilla, Antonio and Ayroles, Julien F and Zhu, Dianhui and Casillas, S{\`o}nia and Han, Yi and Magwire, Michael M and Cridland, Julie M and others},
  journal={Nature},
  volume={482},
  number={7384},
  pages={173--178},
  year={2012},
  publisher={Nature Publishing Group UK London}
}

@article{everett2020gene,
  title={Gene expression networks in the Drosophila genetic reference panel},
  author={Everett, Logan J and Huang, Wen and Zhou, Shanshan and Carbone, Mary Anna and Lyman, Richard F and Arya, Gunjan H and Geisz, Matthew S and Ma, Junwu and Morgante, Fabio and Armour, Genevieve St and others},
  journal={Genome research},
  volume={30},
  number={3},
  pages={485--496},
  year={2020},
  publisher={Cold Spring Harbor Lab}
}

@article{langfelder2008wgcna,
  title={WGCNA: an R package for weighted correlation network analysis},
  author={Langfelder, Peter and Horvath, Steve},
  journal={BMC bioinformatics},
  volume={9},
  number={1},
  pages={559},
  year={2008},
  publisher={Springer}
}

@article{barabasi2004network,
  title={Network biology: understanding the cell's functional organization},
  author={Barabasi, Albert-Laszlo and Oltvai, Zoltan N},
  journal={Nature reviews genetics},
  volume={5},
  number={2},
  pages={101--113},
  year={2004},
  publisher={Nature Publishing Group UK London}
}

@article{saito2015precision,
  title={The precision-recall plot is more informative than the ROC plot when evaluating binary classifiers on imbalanced datasets},
  author={Saito, Takaya and Rehmsmeier, Marc},
  journal={PloS one},
  volume={10},
  number={3},
  pages={e0118432},
  year={2015},
  publisher={Public Library of Science}
}

@book{Lehmann2005,
  author    = {Lehmann, E. L. and Romano, Joseph P.},
  title     = {Testing Statistical Hypotheses},
  edition   = {3rd},
  publisher = {Springer},
  address   = {New York},
  year      = {2005}
}

@inproceedings{bucur2018bayesian,
  title={A Bayesian Approach for Inferring Local Causal Structure in Gene Regulatory Networks},
  author={Bucur, Ioan Gabriel and Bussel, Tom and Claassen, Tom and Heskes, Tom},
  booktitle={International Conference on Probabilistic Graphical Models},
  pages={37--48},
  year={2018},
  organization={PMLR}
}

@article{teixeira2023yeastract,
  author = {Teixeira, M. C. and Viana, R. and Palma, M. and Oliveira, J. and Galocha, M. and Mota, M. N. and Couceiro, D. and Pereira, M. G. and Antunes, M. and Costa, I. V. and Pais, P. and Parada, C. and Chaouiya, C. and S{\'a}-Correia, I. and Monteiro, P. T.},
  title = {YEASTRACT+: a portal for the exploitation of global transcription regulation and metabolic model data in yeast biotechnology and pathogenesis},
  journal = {Nucleic Acids Research},
  volume = {51},
  number = {D1},
  pages = {D785--D791},
  year = {2023},
  doi = {10.1093/nar/gkac1041}
}

@article{lu2024suppression,
  title={Suppression of Drosophila antifungal immunity by a parasite effector via blocking GNBP3 and GNBP-like 3, the dual receptors for $\beta$-glucans},
  author={Lu, Mengting and Wei, Dongxiang and Shang, Junmei and Li, Shiqin and Song, Shuangxiu and Luo, Yujuan and Tang, Guirong and Wang, Chengshu},
  journal={Cell reports},
  volume={43},
  number={1},
  year={2024},
  publisher={Elsevier}
}

@article{perlmutter2024single,
  title={A single amino acid polymorphism in natural Metchnikowin alleles of Drosophila results in systemic immunity and life history tradeoffs},
  author={Perlmutter, Jessamyn I and Chapman, Joanne R and Wilkinson, Mason C and Nevarez-Saenz, Isaac and Unckless, Robert L},
  journal={Plos Genetics},
  volume={20},
  number={3},
  pages={e1011155},
  year={2024},
  publisher={Public Library of Science San Francisco, CA USA}
}

@article{clemmons2015effector,
  title={An effector peptide family required for Drosophila Toll-mediated immunity},
  author={Clemmons, Alexa W and Lindsay, Scott A and Wasserman, Steven A},
  journal={PLoS Pathogens},
  volume={11},
  number={4},
  pages={e1004876},
  year={2015},
  publisher={Public Library of Science San Francisco, CA USA}
}

@article{lindsay2018short,
  title={Short-form bomanins mediate humoral immunity in Drosophila},
  author={Lindsay, Scott A and Lin, Samuel JH and Wasserman, Steven A},
  journal={Journal of innate immunity},
  volume={10},
  number={4},
  pages={306--314},
  year={2018},
  publisher={S. Karger AG Basel, Switzerland}
}


\providecommand{\indep}{\perp\!\!\!\perp}
\providecommand{\nindep}{\not\!\perp\!\!\!\perp}
\providecommand{\dsep}{\perp_{d}}
\providecommand{\Pa}{\mathrm{Pa}}
\providecommand{\Ch}{\mathrm{Ch}}
\providecommand{\An}{\mathrm{An}}
\providecommand{\De}{\mathrm{De}}
\providecommand{\ND}{\mathrm{NonDe}}
\providecommand{\MB}{\mathrm{MB}}
\providecommand{\Sij}{\mathbf{S}_{ij}}
\providecommand{\Tcal}{\mathcal{T}}

\tikzset{
  node style/.style={circle, draw, minimum size=7mm, inner sep=0pt},
  arrow style/.style={-{Latex[length=2mm]}},
  bidir/.style={{Latex[length=2mm]}-{Latex[length=2mm]}}
}

\providecommand{\appendix}{\section*{Supplementary Material}\setcounter{section}{0}\renewcommand{\thesection}{\Alph{section}}}
\appendix

\section{ASCEND: Method Details}
\label{sec:supp-method}

ASCEND is a constraint-based causal-discovery framework for high-dimensional, two-tier
biological systems in which a set of background variables $\mathbf{Z}$ is known
to causally precede a set of foreground variables $\mathbf{X}$. ASCEND exploits
this ordering to replace the global structure-learning problem with a sequence of
local conditional-independence (CI) tests, each conditioned on a small,
dynamically maintained set of \emph{nearest ancestors}. This reduces both the
number and the order of CI tests relative to methods that condition on the full
background layer, while preserving soundness under standard assumptions.

\subsection{Notation and definitions}
\label{sec:supp-notation}

We encode causal structure as a directed acyclic graph (DAG)
$\mathcal{G}=(\mathbf{V},\mathbf{E})$ on observed variables
$\mathbf{V}=\mathbf{Z}\cup\mathbf{X}$, with $|\mathbf{Z}|=d_Z$,
$|\mathbf{X}|=d_X$. We use $\Pa(\cdot)$, $\Ch(\cdot)$, $\An(\cdot)$,
$\De(\cdot)$ for parents, children, ancestors and descendants respectively, and
$\ND(\cdot)=\mathbf{V}\setminus\De(\cdot)$ for non-descendants of a variable. A node $M$ is a \emph{mediator} of an $X$--$Y$ path if it is an internal non-collider on a
directed path $X\to\cdots\to Y$; a node $C$ is a \emph{collider} on a path if
both adjacent edges point into it. We write $A\dsep B\mid\mathbf{C}$ for
$d$-separation of $A$ and $B$ given $\mathbf{C}$ in $\mathcal{G}$, and
$A\indep B\mid\mathbf{C}$ for the corresponding conditional independence in the
data distribution $P$.

The \emph{ancestral relations} among foreground variables, recorded in a matrix
$\mathbf{A}\in\{0,0.5,1,\mathrm{NA}\}^{d_X\times d_X}$, are \\

$$A_{ij}=1 \;\Leftrightarrow\; X_i\prec X_j \;(X_i\in\An(X_j)),$$
$$A_{ij}=0.5 \;\Leftrightarrow\; X_i\preceq X_j \;(X_i\notin\De(X_j)),$$
$$A_{ij}=0 \;\Leftrightarrow\; X_i\sim X_j,$$
where $X_i\sim X_j$ means neither is an ancestor of the other, and
$\mathrm{NA}$ marks an undetermined pair. Ancestry is a strict partial order, i.e \textit{irreflexive}($X \prec X \vdash \text{FALSE}$), meaning a node cannot be its own ancestor; \textit{Asymmetric} ($X \prec Y \vdash Y \not\prec X$), meaning if $X$ is an ancestor of $Y$, then $Y$ cannot be an ancestor of $X$ and \textit{Transitive} ($X \prec Y \land Y \prec Z \vdash X \prec Z$), meaning if $X$ is an ancestor of $Y$ and $Y$ is an ancestor of $Z$, then $X$ is an ancestor of $Z$. $X\preceq Y$ is read ``$X$ is a
non-descendant of $Y$''.

\begin{definition}[Markov blanket and relativised parents]
\label{def:mb}
For $\Tcal\subseteq\ND(X)$, let $\Pa(X;\Tcal)$ denote the parent set of $X$ in
the latent projection of $\mathcal{G}$ onto $\Tcal\cup\{X\}$. Because $X$ has no
child in $\Tcal$ (children are descendants), this set equals the Markov blanket
of $X$ computed by CI tests restricted to $\Tcal$, and consists exactly of the
\emph{nearest ancestors} of $X$ within $\Tcal$.
\end{definition}

\begin{definition}[(De)activators]
\label{def:deact}
For a single node $W$ and conditioning set $\mathbf{C}$: $W$ is a
\emph{deactivator} of $(A,B\mid\mathbf{C})$, written
$A\indep B\mid\mathbf{C}\cup[W]$, if $A\nindep B\mid\mathbf{C}$ and
$A\indep B\mid\mathbf{C}\cup\{W\}$; $W$ is an \emph{activator}, written
$A\nindep B\mid\mathbf{C}\cup[W]$, if $A\indep B\mid\mathbf{C}$ and
$A\nindep B\mid\mathbf{C}\cup\{W\}$. \citep{claassen2012logical,watson2022causal}.
\end{definition}

\subsection{Assumptions}
\label{sec:supp-assumptions}

\begin{assumption}[Two-tier ordering]\label{a:tier}
No foreground variable is an ancestor of any background variable:
$\forall X_i\in\mathbf{X},\,Z_k\in\mathbf{Z},\ X_i\notin\An(Z_k)$. Biologically,
genotype or DNA methylation ($\mathbf{Z}$) precedes transcription ($\mathbf{X}$) and not the reverse.
\end{assumption}

\begin{assumption}[Markov and faithfulness]\label{a:faith}
$P$ is Markov and faithful to $\mathcal{G}$ \citep{spirtes2001causation}, this implies that every CI statement in $P$ coincides with a d-separation statement in $\mathcal{G}$ so $A\indep B\mid\mathbf{C}\Leftrightarrow A\dsep B\mid\mathbf{C}$.
\end{assumption}

The theoretical results in Appendix~\ref{sec:supp-theory} are stated for an
exact CI oracle $\mathcal{I}$; the finite-sample implementation replaces
$\mathcal{I}$ with partial correlation tests.

\subsection{The ASCEND algorithm}
\label{sec:supp-algo}

ASCEND maintains, for each $X$, a set $\Tcal_X^{(t)}$ of \emph{known
non-descendants} at iteration $t$, initialised at $\Tcal_X^{(1)}=\mathbf{Z}$
(valid by Assumption~\ref{a:tier}) and updated by
\begin{equation}\label{eq:Tupdate}
\Tcal_X^{(t+1)}=\{\,Y\in\mathbf{Z}\cup\mathbf{X} : Y\preceq_t X\,\},
\end{equation} where $\preceq_t$ is the partial order recorded after iteration $t$. Each iteration performs four steps.

\paragraph{Step 1 (Nearest ancestors).}
For each $X$ compute $\Pa(X;\Tcal_X^{(t)})$ by Markov-blanket discovery
restricted to $\Tcal_X^{(t)}$ (Definition~\ref{def:mb}); in practice we use
IAMB \citep{tsamardinos2003algorithms} with Fisher's $z$-test.

\paragraph{Step 2 (Conditioning set and CI test).}
For a pair $(X_i,X_j)$ we condition on the \emph{guarded} union of nearest
ancestors,
\begin{equation}\label{eq:guardedS}
\begin{split}
\Sij \;=\; \bigl\{\, \Pa(X_i;\Tcal_i)\cup \Pa(X_j;\Tcal_j) & \,\bigr\}\setminus\{ \De(X_i) \cup \De(X_j) \}
\end{split}
\end{equation}
keeping a foreground variable only if it is a known non-descendant of \emph{both}
endpoints; all background $\mathbf{Z}$ qualify automatically by
Assumption~\ref{a:tier}. We then test $X_i\indep X_j\mid\Sij$ by a
likelihood-ratio comparison of linear models, equivalent to the
partial-correlation test and uniformly most powerful under our assumptions
\citep{Lehmann2005}.

\begin{remark}[Why the both-sides condition is required]
\label{rem:guard}
The naive union $\Pa(X_i;\Tcal_i)\cup\Pa(X_j;\Tcal_j)$ is unsafe: a mediator
$M$ on $X_i\to M\to X_j$ is a non-descendant of $X_j$ yet a descendant of
$X_i$. Conditioning on $M$ blocks the only directed $X_i$--$X_j$ path, so R3
(below) would falsely declare $X_i\sim X_j$. The constraint
$\Sij\preceq X_i\wedge \Sij\preceq X_j$ in \eqref{eq:guardedS} excludes exactly such
mediators while retaining every common ancestor;
\end{remark}

\paragraph{Step 3 (Orientation rules).}
Building on definition \ref{def:deact}, for $W\in\Sij$ write $\mathbf{S}_{\setminus W}=\Sij\setminus\{W\}$. Building on
\citet{entner2013data}, \citet{magliacane2016ancestral} and
\citet{watson2022causal}:
\begin{itemize}
\item[(R1)] \emph{Deactivation.} If $\exists W: W\indep X_j\mid
  \mathbf{S}_{\setminus W}\cup[X_i]$, then $X_i\prec X_j$. 
\item[(R2)] \emph{Activation.} If $\exists W: W\nindep X_i\mid
  \mathbf{S}_{\setminus W}\cup[X_j]$, then $X_i\preceq X_j$.
\item[(R3)] \emph{Independence.} If $X_i\indep X_j\mid\Sij$, then
  $X_i\sim X_j$.
\end{itemize}

Minimal examples of structures identifiable by these rules are shown in Fig.~\ref{fig:dag-horizontal}. (a) depicts an instance of R1, (b) instance of R2 and (d) depicts an instance of R3. Fig \ref{fig:closure_symmetry} demonstrates examples of closure.

\paragraph{Step 4 (Closure and refinement).}
Applying transitivity ($A\prec B\prec C\Rightarrow A\prec C$) and symmetry ($A\preceq B\wedge B\preceq A\Rightarrow A\sim B$), update each
$\Tcal_X$ via equation \eqref{eq:Tupdate}, and recompute nearest ancestors. We repeat the iterations until no new relation is found. See algorithm \ref{alg:closure} for pseudocode.

\subsection{ASCEND Pseudocode (Oracle Version)}
\label{sec:supp-pseudo}

\begin{algorithm2e}[t]
\caption{\sc ASCEND - oracle version}
\label{alg:ascend}
\small\SetAlgoLined\DontPrintSemicolon
\KwIn{Background $\mathbf{Z}$, foreground $\mathbf{X}$, CI oracle $\mathcal{I}$}
\KwOut{Ancestrality matrix $\mathbf{M}$}
Initialise $\texttt{converged}\gets\texttt{FALSE}$, $\mathbf{M}\gets[\texttt{NA}]$,
$\Tcal_X^{(1)}\gets\mathbf{Z}\ \forall X$, $t\gets0$\;
\While{\textnormal{not} \texttt{converged}}{
  $t\gets t+1$;\quad $\texttt{converged}\gets\texttt{TRUE}$\;
  \For{each $X\in\mathbf{X}$}{$\Pa(X;\Tcal_X^{(t)})\gets\MB(X;\Tcal_X^{(t)})$\;}
  \For{each $X_i,X_j\in\mathbf{X}$ \textnormal{s.t.} $i>j$ \textnormal{and} $\mathbf{M}_{ij}=\textnormal{\texttt{NA}}$}{
    $\Sij \gets \Pa(X_i;\Tcal_i)\cup \Pa(X_j;\Tcal_j)  \,\bigr\}\setminus\{ \De(X_i) \cup \De(X_j)$
    \uIf{$\mathcal{I}(X_i\indep X_j\mid\Sij)$}{
      $\mathbf{M}_{ij}\gets i\sim j$;\quad $\texttt{converged}\gets\texttt{FALSE}$
    }\Else{
      \For{each $W\in\Sij$, $\mathbf{S}_{\setminus W}\gets\Sij\setminus\{W\}$}{
        \uIf{$\mathcal{I}(W\indep X_j\mid \mathbf{S}_{\setminus W}\cup[X_i])$}{
          $\mathbf{M}_{ij}\gets i\prec j$;\quad $\texttt{converged}\gets\texttt{FALSE}$
        }\uElseIf{$\mathcal{I}(W\indep X_i\mid \mathbf{S}_{\setminus W}\cup[X_j])$}{
          $\mathbf{M}_{ij}\gets j\prec i$;\quad $\texttt{converged}\gets\texttt{FALSE}$
        }\uElseIf{$\mathcal{I}(W\nindep X_j\mid \mathbf{S}_{\setminus W}\cup[X_i])$}{
          $\mathbf{M}_{ij}\gets \mathbf{M}_{ij}\wedge (j\preceq i)$;\quad $\texttt{converged}\gets\texttt{FALSE}$
        }\uElseIf{$\mathcal{I}(W\nindep X_i\mid \mathbf{S}_{\setminus W}\cup[X_j])$}{
          $\mathbf{M}_{ij}\gets \mathbf{M}_{ij}\wedge (i\preceq j)$;\quad $\texttt{converged}\gets\texttt{FALSE}$
        }
      }
    }
  }
  $\mathbf{M}\gets\textsc{Closure}(\mathbf{M})$
  \For{each $X\in\mathbf{X}$}{$\Tcal_X^{(t+1)}\gets\{Y\in\mathbf{Z}\cup\mathbf{X}: Y\preceq_t X\}$\;}
  \lIf{$\mathbf M^{(t)} = \mathbf M^{(t-1)}$} 
    {$\texttt{converged} \gets \texttt{TRUE}$}
}
\Return $\mathbf{M}$\;
\end{algorithm2e}

The bracket notation in Algorithm~\ref{alg:ascend} carries both conditions of
Definition~\ref{def:deact}; convergence is declared only when a full iteration
changes neither $\mathbf{M}$ nor any nearest-ancestor set.

\begin{algorithm2e}[h]
\caption{{\sc Closure}}
\label{alg:closure}
\small
\SetAlgoLined
\DontPrintSemicolon
\KwIn{Ancestrality matrix $\mathbf{M}$}
\KwOut{Updated ancestrality matrix $\mathbf{M}$}
\For{each $i, j \in \{1, \dots, d_X\}$ s.t. $i > j$}{
    \uIf{$(i \preceq_{\bf M} j \land i \succeq_{\bf M} j) \lor i \sim_{\bf M} j$}{
        $\mathbf{M}_{ij} \gets i \sim j$\;
    }
    \uElseIf{$i \prec_{\bf M} j$}{
        $\mathbf{M}_{ij} \gets i \prec j$\;
    }
    \uElseIf{$j \prec_{\bf M} i$}{
        $\mathbf{M}_{ij} \gets j \prec i$\;
    }
}
$\texttt{converged} \gets \texttt{FALSE}$\;
\While{not converged}{
    $\texttt{converged} \gets \texttt{TRUE}$\;
    \For{each $i, j, k \in \{1, \dots, d_X\}$ s.t. $i \neq j \neq k,~ i > k$}{
        \uIf{$i \prec_{\bf M} j \prec_{\bf M} k \land \mathbf{M}_{ik} \neq i \prec k$}{
            $\mathbf{M}_{ik} \gets i \prec k$,~ $\texttt{converged} \gets \texttt{FALSE}$\;
        }
        \uElseIf{$k \prec_{\bf M} j \prec_{\bf M} i \land \mathbf{M}_{ik} \neq k \prec i$}{
            $\mathbf{M}_{ik} \gets k \prec i$,~ $\texttt{converged} \gets \texttt{FALSE}$\;
        }
    }
}
\end{algorithm2e}

\begin{figure}[H]
  \centering
  \begin{minipage}{0.24\textwidth}\centering
    \begin{tikzpicture}[scale=0.55, transform shape]
      \node[node style] (Z) at (1.5,2) {$W$};
      \node[node style] (X) at (0,0) {$X$};
      \node[node style] (Y) at (3,0) {$Y$};
      \path[arrow style] (Z) edge (X) (X) edge (Y);
    \end{tikzpicture}\\ \small (a) Minimal R1
  \end{minipage}\hfill
  \begin{minipage}{0.24\textwidth}\centering
    \begin{tikzpicture}[scale=0.55, transform shape]
      \node[node style] (Z) at (1.5,2) {$W$};
      \node[node style] (X) at (0,0) {$X$};
      \node[node style] (Y) at (3,0) {$Y$};
      \path[arrow style] (Z) edge (Y) (X) edge (Y);
    \end{tikzpicture}\\ \small (b) Minimal R2
  \end{minipage}\hfill
  
  \begin{minipage}{0.24\textwidth}\centering
    \begin{tikzpicture}[scale=0.55, transform shape]
      \node[node style] (Z) at (1.5,2) {$W$};
      \node[node style] (X) at (0,0) {$X$};
      \node[node style] (Y) at (3,0) {$Y$};
      \path[arrow style] (Z) edge (X) (Z) edge (Y);
    \end{tikzpicture}\\ \small (d) Case for R3
  \end{minipage}
  \caption{Minimal DAGs detected by the orientation rules, with witness $W$.}
  \label{fig:dag-horizontal}
\end{figure}

\begin{figure}[H]
    \centering
    \includegraphics[width=0.7\columnwidth]{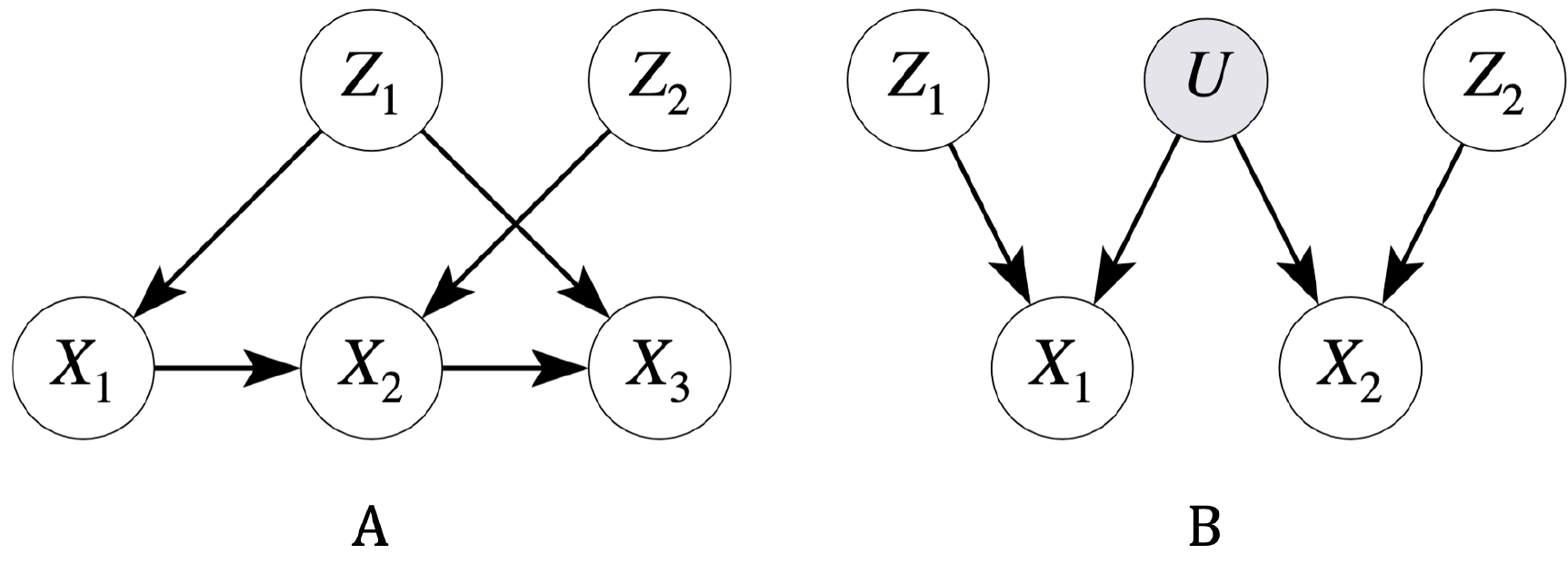}
    \caption{Example graphs illustrating how we exploits transitivity and antisymmetry to infer causal structure.}
    \label{fig:closure_symmetry}
\end{figure}

\section{Theoretical Guarantees}
\label{sec:supp-theory}

Throughout this section $\mathcal{I}$ is the exact oracle of
Assumption~\ref{a:faith}, so all independence statements are $d$-separation
statements. In this section we prove soundness, completeness and complexity.

\subsection{Soundness}
\label{sec:supp-sound}

\begin{lemma}[Non-descendant invariant]\label{lem:inv}
If every entry committed to $\mathbf{M}$ before sweep $t$ is correct, then
$\Tcal_X^{(t)}\subseteq\ND(X)$ for all $X$.
\end{lemma}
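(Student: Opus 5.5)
The argument is a case split on how a variable enters $\Tcal_X^{(t)}$, run separately for the initial sweep and for later sweeps. Fix a foreground variable $X$ and let $Y\in\Tcal_X^{(t)}$. We must show $Y\notin\De(X)$.

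\emph{Base case $t=1$.} Here $\Tcal_X^{(1)}=\mathbf{Z}$, so $Y=Z_k$ for some background variable. If $Z_k\in\De(X)$, then $X\in\An(Z_k)$, which contradicts Assumption~\ref{a:tier}. Hence $\mathbf{Z}\subseteq\ND(X)$. This case needs no hypothesis on $\mathbf{M}$, which is consistent with the lemma being vacuous before any entry is committed.

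\emph{Case $t\ge 2$.} By \eqref{eq:Tupdate}, $\Tcal_X^{(t)}=\{Y\in\mathbf{Z}\cup\mathbf{X}:Y\preceq_{t-1}X\}$, where $\preceq_{t-1}$ is read off $\mathbf{M}$ after \textsc{Closure} at the end of sweep $t-1$. Every entry committed before sweep $t$ is assumed correct. This includes the entries written by \textsc{Closure}, since they are part of $\mathbf{M}$ at the start of sweep $t$.

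If $Y\in\mathbf{Z}$, the base-case argument applies verbatim. If $Y\in\mathbf{X}$, then $Y\preceq_{t-1}X$ must be witnessed by one of the three committed labels $Y\prec X$, $Y\preceq X$, or $Y\sim X$. I will read the update rule so that each of these labels certifies non-descendance, with $\sim$ counting as well since it says neither variable is an ancestor of the other. Each label is correct by hypothesis, so in every case $Y\notin\De(X)$:
\begin{itemize}
\item $A_{YX}=1$ gives $Y\in\An(X)$, hence $Y\notin\De(X)$ by asymmetry of the strict partial order;
\item $A_{YX}=0.5$ is literally $Y\notin\De(X)$;
\item $A_{YX}=0$ gives $X\notin\An(Y)$, i.e.\ $Y\notin\De(X)$.
\end{itemize}
Finally, $X\notin\Tcal_X^{(t)}$, either by convention or by irreflexivity; in any case $\De(X)$ is taken to exclude $X$, so including $X$ would not break the claim either.

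\emph{Main obstacle.} The difficulty is not mathematical but interpretive. One must pin down what ``$Y\preceq_t X$'' means in terms of the entries of $\mathbf{M}$, including entries that combine labels via $\wedge$ in Algorithm~\ref{alg:ascend}. One must also confirm that only committed, hence correct, entries feed into it and never $\mathrm{NA}$ entries. I would state explicitly that $\preceq_t$ is the relation induced by the non-$\mathrm{NA}$ entries of $\mathbf{M}$ after \textsc{Closure}, with $\mathrm{NA}$ contributing nothing. With that reading fixed, the lemma reduces to the per-label check above.
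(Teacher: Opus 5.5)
Your proof is correct and follows the same route as the paper's: the base case $\Tcal_X^{(1)}=\mathbf{Z}\subseteq\ND(X)$ comes from Assumption~\ref{a:tier}, and at later sweeps membership via \eqref{eq:Tupdate} rests on a committed, hence correct, relation $Y\preceq X$, which gives $Y\notin\De(X)$. Your per-label check ($\prec$, $\preceq$, $\sim$), your separate handling of background $Y$ at later sweeps, and your remark that $X\notin\Tcal_X^{(t)}$ only spell out what the paper's two-line argument leaves implicit.
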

\begin{proof}
At $t=1$, $\Tcal_X^{(1)}=\mathbf{Z}\subseteq\ND(X)$ by Assumption~\ref{a:tier}, since $X$ is never an ancestor of a background variable.
For the step, $Y\in\Tcal_X^{(t+1)}$ iff $Y\preceq_t X$
\eqref{eq:Tupdate}; by hypothesis this relation is correct, so
$Y\notin\De(X)$.
\end{proof}

\begin{lemma}[Relativised parents are minimal and sufficient]\label{lem:paT}
Let $\Tcal\subseteq\ND(X)$, then $\Pa(X;\Tcal)$ equals the Markov blanket of $X$ computed by CI tests over $\Tcal\cup\{X\}$; it contains no descendant of $X$; and conditioning on it blocks every back-door path from $X$ whose intermediate nodes lie in $\Tcal$.
\end{lemma}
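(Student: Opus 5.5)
The plan is to prove the three claims of Lemma~\ref{lem:paT} in order, working in the latent projection $\mathcal{G}'$ of $\mathcal{G}$ onto $\Tcal\cup\{X\}$. By Assumption~\ref{a:faith}, every CI statement among these variables is a $d$-separation statement in $\mathcal{G}$. Since latent projection preserves $d$-separation among the retained nodes, these statements are equally $m$-separation statements in $\mathcal{G}'$.

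\textbf{Step 1 (no descendants, no children in $\mathcal{G}'$).} We have $\Tcal\subseteq\ND(X)$. An edge $X\to Y$ in $\mathcal{G}'$ would require a directed path from $X$ to $Y$ in $\mathcal{G}$ whose interior nodes are all latent, which would make $Y\in\De(X)$. Hence $X$ has no children in $\mathcal{G}'$. It follows immediately that $\Pa(X;\Tcal)\subseteq\Tcal$ contains no descendant of $X$, which is the second claim.

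\textbf{Step 2 (Markov blanket equals relativised parents).} In $\mathcal{G}'$ the Markov blanket of $X$ is its parents, its children, and the spouses of its children, together with any district members if bidirected edges appear. Since $X$ has no children, only nodes adjacent to $X$ via edges into $X$ remain.

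One subtlety must be handled: a bidirected edge $X\leftrightarrow Y$ arising from a latent common cause. I would treat such $Y$ as part of $\Pa(X;\Tcal)$, in the sense of Definition~\ref{def:mb}, i.e.\ nodes adjacent into $X$. Where needed, I would note that the ancestral closure of these nodes is what the definition calls nearest ancestors.

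Two directions then need checking. First, every node adjacent to $X$ in $\mathcal{G}'$ cannot be $m$-separated from $X$ by any subset, so by faithfulness it is dependent on $X$ given any conditioning set and must appear in every blanket. Second, conditioning on the adjacents $\mathbf{P}$ renders $X$ independent of $\Tcal\setminus\mathbf{P}$. For the second direction, any $m$-connecting path from $X$ to $T\in\Tcal\setminus\mathbf{P}$ starts with an edge into $X$ at some $P\in\mathbf{P}$. $P$ is then a non-collider on the path, because the edge at $X$ is into $X$, so $P$ cannot have both adjacent marks pointing into it unless the first edge is bidirected. In that case I would extend via districts; this is the main obstacle. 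Uniqueness of the Markov blanket under faithfulness (intersection property) then gives equality with the CI-based blanket, which is the first claim.

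\textbf{Step 3 (back-door blocking).} Take a back-door path from $X$, i.e.\ one starting with an edge into $X$, whose intermediate nodes all lie in $\Tcal$. Such a path is a path in $\mathcal{G}$ that survives in $\mathcal{G}'$. Its first intermediate node is adjacent into $X$, hence lies in $\Pa(X;\Tcal)$, and it is a non-collider because the path leaves $X$ via an edge out of that node toward $X$. Conditioning on it therefore blocks the path, which is the third claim.

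The expected difficulty lies in Step 2's treatment of latent confounding (bidirected edges and collider chains among conditioned nodes). If that treatment proves delicate, I would state the lemma under the reading of Definition~\ref{def:mb} that identifies $\Pa(X;\Tcal)$ with the CI-defined blanket, and prove only its minimality and descendant-freeness rigorously.
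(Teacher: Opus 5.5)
\textbf{Verdict: there is a genuine gap in Step 2, and it cannot be closed as the lemma is stated.} Your Steps 1 and 3 are sound. Your outline is the same as the paper's: latent projection onto $\mathcal{T}\cup\{X\}$, $X$ childless there, and the first node on a path leaving $X$ is a conditioned non-collider. The gap is the one you flag in Step 2. No amount of work on districts will close it, because for an arbitrary $\mathcal{T}\subseteq\mathrm{NonDe}(X)$ the first clause of the lemma is false.

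\textbf{Why the claim fails.} If $X$ has a bidirected neighbour $P$ in the projection, $P$ can be a collider on a path leaving $X$, and conditioning on $P$ opens that path. By faithfulness the CI blanket must then absorb further district members and their parents, which need not be adjacent to $X$ or ancestors of it.

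\textbf{Counterexample.} Take $Z\to X_k\leftarrow X_m\to X$ with $Z\in\mathbf{Z}$, the other three foreground, and $\mathcal{T}=\{Z,X_k\}\subseteq\mathrm{NonDe}(X)$. Then $X\perp\!\!\!\perp Z$, $X\not\perp\!\!\!\perp Z\mid X_k$ and $X\not\perp\!\!\!\perp X_k\mid Z$, so the blanket of $X$ over $\mathcal{T}\cup\{X\}$ is $\{Z,X_k\}$. The projection, however, is $Z\to X_k\leftrightarrow X$. So $X$ has no projection parents (only $X_k$ if bidirected neighbours are counted), and neither blanket member is an ancestor of $X$.

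Your proposed patches therefore do not rescue the statement. Counting bidirected neighbours as parents still misses $Z$. Calling them nearest ancestors is false. Declaring $\mathrm{Pa}(X;\mathcal{T})$ to be the CI blanket by definition makes the clause empty, and it discards the ancestor characterisation that later results such as Lemma~\ref{lem:noloss} use.

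\textbf{The paper's proof has the same hole.} Its opening step (childless, hence blanket equals projection parents) already fails on this example. Its justification is that an adjacency into $X$ created by marginalisation lies on an inducing path whose $\mathcal{T}$-endpoint is ``therefore'' an ancestor of $X$. The confounding path $X\leftarrow X_m\to X_k$ contradicts this.

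\textbf{What is needed.} Both arguments need a hypothesis that rules out bidirected edges at $X$. One such hypothesis is that every common ancestor of $X$ and a member of $\mathcal{T}$ lies in $\mathcal{T}$. It holds for $\mathcal{T}=\mathbf{Z}$: under Assumption~\ref{a:tier}, with the DAG over the observed $\mathbf{V}$, every ancestor of a background variable is background. It also holds whenever $\mathcal{T}$ contains all ancestors of $X$. It is not guaranteed by construction for the intermediate sets $\mathcal{T}_X^{(t)}$ of known non-descendants.

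Under this hypothesis, every projection edge at $X$ is $P\to X$ with $P\in\mathrm{An}(X)$. Each such edge is realised by a directed path whose interior avoids $\mathcal{T}\cup\{X\}$, so no subset of $\mathcal{T}$ can block it. Your Step 2 then goes through as written. Each such $P$ lies in every blanket by faithfulness. The set $\mathbf{P}$ of them is itself a blanket, because every path from $X$ into $\mathcal{T}\setminus\mathbf{P}$ begins with a conditioned non-collider. This gives a correct and more careful version of the paper's argument.
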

\begin{proof}
$X$ has no child in $\Tcal$, so its Markov blanket within $\Tcal\cup\{X\}$ is
the parent set of $X$ in the latent projection
(Definition~\ref{def:mb}). Marginalising a node outside $\Tcal\cup\{X\}$ adds an
adjacency to $X$ only along an inducing path that is into $X$, whose
$\Tcal$-endpoint is therefore an ancestor of $X$; marginalising
a collider child opens nothing. Hence every blanket member is an ancestor of $X$
in $\Tcal$, and projection parents block all represented back-door paths.
\end{proof}

\begin{definition}[Valid conditioning, VC]\label{def:vc}
A set $\mathbf{S}$ satisfies VC for $(X_i,X_j)$ if
$\mathbf{S}\subseteq\ND(X_i)\cap\ND(X_j)$.
\end{definition}

\begin{lemma}[The guarded set satisfies VC]\label{lem:vc}
If every committed entry of $\mathbf{M}$ is correct, then $\Sij$ of
\eqref{eq:guardedS} satisfies VC. Consequently no $W\in\Sij$ is a mediator of a
directed $X_i$--$X_j$ path.
\end{lemma}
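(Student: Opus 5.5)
Fix a pair $(X_i,X_j)$ and assume every committed entry of $\mathbf{M}$ is correct. The plan is to show $\Sij\subseteq\ND(X_i)\cap\ND(X_j)$ element by element, splitting on whether $W\in\Sij$ is a background or a foreground variable.

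\emph{Background elements.} If $W\in\mathbf{Z}$, then by Assumption~\ref{a:tier} neither $X_i$ nor $X_j$ is an ancestor of $W$. Hence $W\notin\De(X_i)\cup\De(X_j)$, and VC holds for $W$ directly.

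\emph{Foreground elements.} If $W\in\mathbf{X}$, then by \eqref{eq:guardedS} $W\in\Pa(X_i;\Tcal_i)\cup\Pa(X_j;\Tcal_j)$, and $W$ survives the guard. Without loss of generality take $W\in\Pa(X_i;\Tcal_i)$.
\begin{itemize}
\item Lemma~\ref{lem:inv} gives $\Tcal_i\subseteq\ND(X_i)$. Lemma~\ref{lem:paT} then gives that $W$ is an ancestor of $X_i$ and not a descendant of it, so $W\in\ND(X_i)$.
\item For the other endpoint, read the guard operationally. The subtracted set $\De(X_j)$ is the set of \emph{recorded} descendants, and per the text a foreground variable is kept only if it is a known non-descendant of both endpoints, i.e.\ $W\preceq_t X_j$.
\item Since recorded relations are correct by hypothesis, $W\preceq_t X_j$ implies $W\notin\De(X_j)$.
\end{itemize}
Symmetrically, if $W\in\Pa(X_j;\Tcal_j)$ then $W\in\ND(X_j)$ by the two lemmas and $W\in\ND(X_i)$ by the guard. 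In every case $W\in\ND(X_i)\cap\ND(X_j)$.

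\emph{The mediator consequence.} Suppose $W\in\Sij$ were a mediator, i.e.\ an internal node on a directed path $X_i\to\cdots\to W\to\cdots\to X_j$ (or the reverse direction). Then $W\in\De(X_i)$ (respectively $\De(X_j)$), which contradicts the VC property just established.

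\emph{Main obstacle.} The delicate point is interpreting the guard. If $\De(\cdot)$ in \eqref{eq:guardedS} denotes true descendants, VC is immediate by construction. If it denotes descendants known so far, subtracting them does not by itself exclude unknown descendants. That is why I would rely on the stated "kept only if a known non-descendant of both endpoints" rule, together with the correctness hypothesis and Lemmas~\ref{lem:inv} and \ref{lem:paT}. The key observation is that membership in $\Tcal_X$, and hence in $\Pa(X;\Tcal_X)$, already certifies non-descendance with respect to $X$ itself, so only the cross-endpoint check needs the guard.
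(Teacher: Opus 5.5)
Your proposal is correct and follows the paper's proof. Background members are non-descendants by the two-tier assumption; a foreground member survives the guard only if it is recorded as a non-descendant of both endpoints (the paper's ``by construction''), which together with correctness of $\mathbf{M}$ gives VC; and a mediator is excluded because it is a descendant of an endpoint. The only small variations are that you certify non-descendance of the ``own'' endpoint via $\mathrm{Pa}(X_i;\mathcal{T}_i)\subseteq\mathcal{T}_i\subseteq\mathrm{NonDe}(X_i)$ from Lemma~\ref{lem:inv} rather than via the guard, and that you state explicitly what the paper assumes tacitly, namely that $\mathrm{De}(\cdot)$ in the guard means recorded descendance.
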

\begin{proof}
Each foreground $W\in\Sij$ satisfies $W\preceq X_i$ and $W\preceq X_j$ by
construction, hence $W\in\ND(X_i)\cap\ND(X_j)$; each background
$W\in\mathbf{Z}$ lies in $\ND(X_i)\cap\ND(X_j)$ by Assumption~\ref{a:tier}. A
mediator of $X_i\to\cdots\to X_j$ lies in $\De(X_i)\cap\An(X_j)$ and so
violates $W\preceq X_i$; it is therefore excluded.
\end{proof}

\begin{theorem}[Soundness]\label{thm:sound}
Under Assumptions~\ref{a:tier} and \ref{a:faith} with the oracle $\mathcal{I}$,
every relation ASCEND commits is correct, and closure preserves
correctness.
\end{theorem}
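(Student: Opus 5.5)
We argue by induction on the order in which entries are committed to $\mathbf{M}$, maintaining the invariant that every committed entry is correct. The base case is trivial, since $\mathbf{M}$ starts as all $\mathrm{NA}$.

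For the inductive step, fix a pair $(X_i,X_j)$ at sweep $t$ and assume all earlier entries are correct. Lemma~\ref{lem:inv} then gives $\Tcal_X^{(t)}\subseteq\ND(X)$. Lemma~\ref{lem:paT} shows that $\Pa(X;\Tcal_X^{(t)})$ consists of ancestors of $X$. Lemma~\ref{lem:vc} shows that $\Sij$ satisfies VC, so $\Sij\subseteq\ND(X_i)\cap\ND(X_j)$ and contains no mediator. We then verify each rule separately, using faithfulness (Assumption~\ref{a:faith}) to read every oracle answer as a $d$-separation statement.

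\textbf{R3.} Suppose $X_i$ were an ancestor of $X_j$. The directed path $X_i\to\cdots\to X_j$ has only non-collider interior nodes, all in $\De(X_i)$. By VC none of them lies in $\Sij$, so the path is open given $\Sij$, contradicting $X_i\dsep X_j\mid\Sij$. The symmetric argument excludes $X_j\prec X_i$, so $X_i\sim X_j$.

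\textbf{R1.} Suppose $W\indep X_j\mid\mathbf{S}_{\setminus W}$ fails but $W\indep X_j\mid\mathbf{S}_{\setminus W}\cup\{X_i\}$ holds. A deactivating node must be a non-collider on every path it blocks. Following Claassen--Heskes / Entner, such a node is an ancestor of $W$ or of $X_j$; here I would quote their minimal-deactivation lemma rather than reprove it. Because $W\in\Sij\subseteq\ND(X_i)$, $X_i$ cannot be an ancestor of $W$, which forces $X_i\in\An(X_j)$. The one delicate point is that $\mathbf{S}_{\setminus W}$ must contain no descendant of $X_i$ or $X_j$ for the lemma to apply. This is exactly what VC guarantees.

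\textbf{R2.} Suppose $X_j$ is a minimal activator of $(W,X_i\mid\mathbf{S}_{\setminus W})$. By the dual minimal-activation lemma, $X_j$ is not an ancestor of $W$, $X_i$, or $\mathbf{S}_{\setminus W}$; in particular $X_j\notin\An(X_i)$. The activator in R2 is $X_j$ while the conclusion concerns $X_i\preceq X_j$, so I need to check which pattern the algorithm's branches actually use. The plan is to match each branch of Algorithm~\ref{alg:ascend} to the corresponding minimal-activation statement, with the activator being the node declared a non-ancestor. Where the two tests conflict, the algorithm only intersects the recorded relations through the $\wedge$ in the \textsc{Closure} bookkeeping, so no incorrect relation is introduced.

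\textbf{Closure.} Ancestry is a strict partial order, so transitivity of $\prec$ on correct entries yields correct entries. For the symmetry rule, $X_i\notin\De(X_j)$ and $X_j\notin\De(X_i)$ together imply neither is an ancestor of the other, which is exactly $X_i\sim X_j$. Because closure preserves correctness, the invariant holds at the start of sweep $t+1$, completing the induction.

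The main obstacle is R1/R2: invoking the (de)activator lemmas requires the reduced set $\mathbf{S}_{\setminus W}$ to be free of descendants of the endpoints and of $W$'s own descendants among $X_i,X_j$. I would establish this first, from VC together with $W\in\ND(X_i)\cap\ND(X_j)$.
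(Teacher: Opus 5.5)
Your proposal is correct and follows the paper's proof essentially step for step. Both use induction over commitments via Lemmas~\ref{lem:inv} and~\ref{lem:vc}, the no-mediator argument for R3, minimal (de)activation lemmas for R1/R2, and transitivity/symmetry for closure; you cite Claassen--Heskes where the paper gives an informal collider/non-collider sketch and cites Entner and Magliacane.

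Two small points. Your hesitation in R2 is unnecessary, since $X_j\notin\An(X_i)$ is by definition $X_i\preceq X_j$. In R1, the minimal-deactivation lemma actually gives $X_i\in\An(\{W,X_j\}\cup\mathbf{S}_{\setminus W})$; VC (i.e.\ $\{W\}\cup\mathbf{S}_{\setminus W}\subseteq\ND(X_i)$) then eliminates every disjunct except $X_i\in\An(X_j)$, rather than being a precondition of the lemma.
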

\begin{proof}
We argue by induction over committed entries; Lemma~\ref{lem:vc} guarantees VC
at each commitment.

\emph{Base.} At $t=1$, $\Sij\subseteq\mathbf{Z}$; by Assumption~\ref{a:tier}
no $Z$ is a foreground mediator, so VC holds and any relation committed in the
first sweep is correct.

\emph{R1.} The deactivation $W\indep X_j\mid\mathbf{S}_{\setminus W}\cup[X_i]$
means adding $X_i$ turns $W,X_j$ from $d$-connected to $d$-separated. A node
only blocks paths on which it is a non-collider, so $X_i$ blocks every
$\mathbf{S}_{\setminus W}$-active $W$--$X_j$ path as a non-collider; under
$\Sij\preceq\{X_i,X_j\}$ this is the minimal-deactivation pattern of
\citet{entner2013data} and \citet{magliacane2016ancestral}, entailing
$X_i\in\An(X_j)$, i.e.\ $X_i\prec X_j$.

\emph{R2.} The activation $W\nindep X_i\mid\mathbf{S}_{\setminus W}\cup[X_j]$
means adding $X_j$ opens a $W$--$X_i$ path; a node opens a path only as (a
descendant of) a collider, so $X_j$ is such a collider, giving
$X_i\notin\De(X_j)$, i.e.\ $X_i\preceq X_j$.

\emph{R3.} Suppose $X_i\dsep X_j\mid\Sij$. By Lemma~\ref{lem:vc}, $\Sij$
contains no mediators of any directed $X_i$--$X_j$ path, so no directed path in
either direction is blocked, whence none exists and $X_i\sim X_j$.

\emph{Closure.} Ancestry $\prec$ is transitive and the symmetry rule is valid ($X_i \preceq X_j$ means $X_i \not\in De(X_j)$ and similarly $X_j \preceq X_i$ means $X_j \not\in De(X_i)$ so neither is an ancestor of the other, which is the definition of $X_i \sim X_j$); the acyclicity guard only ever rejects an update, so closure introduces no incorrect relation.
\end{proof}

\begin{remark}[Comparison with the unguarded union]
Dropping the both-sides condition in \eqref{eq:guardedS} breaks
Lemma~\ref{lem:vc}: a foreground mediator may enter $\Sij$ once oriented and
admitted to $\Tcal$, after which R3 can fire spuriously
(Remark~\ref{rem:guard}). The guard is what makes Theorem~\ref{thm:sound}
unconditional rather than holding only on the first sweep.
\end{remark}

\subsection{Completeness}
\label{sec:supp-complete}

Global completeness fails for any rule set of this form: when $X_i$ and $X_j$
are joined by parallel directed paths, no single (de)activation witness isolates
a mediator and the orientation is not identifiable, so $\mathrm{NA}$ is the
correct output. We adopt the \emph{lazy oracle} standard of \citet{watson2022causal}: we prove the two constructive statements that justify the localisation, then use them to show \textsc{ascend-oracle} is non-dominated within the lazy-oracle class, and is as
informative as \textsc{cbl-oracle}, but on a strictly smaller conditioning sets which makes ASCEND faster, cheaper and scalable.
The chain of how informative an inference is follows the following: $ \{\mathrm{NA}\}\ \prec\ \{i\preceq j\}\ \prec\ \{i\prec j\}\ \sim\ \{i\sim j\}  $. Based on this, we formulate the following.

\begin{definition}[Iteration-$t$ known non-descendant, \citealp{watson2022causal}]\label{def:knownnd}
$W$ is an \emph{iteration-$t$ known non-descendant} of $X$ if $W\in Z$, or if
$W\preceq_{\mathbf{M}}X$ after $t$ updates to $\mathbf{M}$. Write
$\mathbf{X}^{t}_{\preceq i}$ for this set for $X_i$ (so
$\mathbf{X}^{t}_{\preceq i}=\Tcal_{X_i}^{(t)}$).
\end{definition}

\begin{definition}[Lazy oracle algorithm, \citealp{watson2022causal}]\label{def:lazy}
A \emph{lazy oracle algorithm} starts from an uninformative $\mathbf{M}$ and
updates each round using only oracle answers to queries of two types:
(i) $W\indep X_i\mid\mathbf{S}_{ij}\setminus\{W\}\cup\phi(X_j)$, with
$W\in\mathbf{S}_{ij}$ and $\phi(X_j)\in\{\emptyset,\{X_j\}\}$; and
(ii) $X_i\indep X_j\mid\mathbf{S}_{ij}$. These are exactly the
(de)activation and separation queries of R1--R3, conditioned on the common known
non-descendants. 
\end{definition}

\begin{definition}[Dominance, \citealp{watson2022causal}]\label{def:dom}
$\mathcal{A}$ \emph{dominates} $\mathcal{B}$ iff (i) on no pair of any DAG is
$\mathcal{A}$'s output less informative than $\mathcal{B}$'s, and (ii) on some
pair of some DAG it is strictly more informative. $\mathcal{A}$ is
\emph{lazy-oracle complete} if no lazy oracle algorithm dominates it.
\end{definition}

\textsc{ascend-oracle} conditions on the guarded
nearest-ancestor set $\Sij\subseteq\mathbf{S}^{full}_{ij}$ of \eqref{eq:guardedS}
in place of the full $\mathbf{S}^{full}_{ij}$. In CBL's case, $\Sij$ includes all background variables and iteration-$t$ known non-descendants. 

\begin{lemma}[Localisation preserves separation]\label{lem:noloss}
Once $\Tcal_i,\Tcal_j$ contain all ancestors of $X_i,X_j$, the minimal guarded set
$\Sij$ and the full valid set $\mathbf{S}^{\mathrm{full}}_{ij}= Z \cup \big\{X \in \textbf{X} \backslash \{X_i, X_j\} : X \preceq_{\bf M} \{X_i, X_j\} \big\}$, as used in the confounder blanket learner \citet{watson2022causal},
give the same verdict: $X_i\dsep X_j\mid\Sij \Leftrightarrow X_i\dsep
X_j\mid\mathbf{S}^{\mathrm{full}}_{ij}$.
\end{lemma}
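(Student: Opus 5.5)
The plan is to prove both directions by comparing each of the two conditioning sets to a common intermediate set, namely $\mathbf{A}_{ij}=\bigl(\An(X_i)\cup\An(X_j)\bigr)\cap\ND(X_i)\cap\ND(X_j)$. The engine is the standard fact that two valid sets, each containing enough ancestors to block all back-door paths, agree on separation whenever no directed path $X_i\to\cdots\to X_j$ (or the reverse) exists. I first fix the hypothesis. Once $\Tcal_i,\Tcal_j$ contain all ancestors of $X_i,X_j$, Lemma~\ref{lem:inv} gives $\Tcal_i\subseteq\ND(X_i)$. Lemma~\ref{lem:paT} then gives $\Pa(X_i;\Tcal_i)=\Pa(X_i)$, because the latent projection onto a set containing all of $\An(X_i)$ returns the true parents. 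Hence $\Sij=\bigl(\Pa(X_i)\cup\Pa(X_j)\bigr)\setminus\bigl(\De(X_i)\cup\De(X_j)\bigr)$. By Lemma~\ref{lem:vc} and the definition of $\mathbf{S}^{\mathrm{full}}_{ij}$, both $\Sij$ and $\mathbf{S}^{\mathrm{full}}_{ij}$ satisfy VC. They are also nested, with $\Sij\subseteq\mathbf{S}^{\mathrm{full}}_{ij}$.

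\emph{Case 1: $X_i$ and $X_j$ are ancestrally related}, say $X_i\in\An(X_j)$. The directed path $X_i\to\cdots\to X_j$ has all its internal nodes in $\De(X_i)$. No VC set contains such a node, so this path is open given either set. Hence $X_i\not\dsep X_j$ under both $\Sij$ and $\mathbf{S}^{\mathrm{full}}_{ij}$, and the equivalence holds trivially. The symmetric subcase is identical.

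\emph{Case 2: $X_i\sim X_j$.} Every path between $X_i$ and $X_j$ either begins with an edge into $X_i$ or into $X_j$, or contains a collider. I will show that both sets $d$-separate $X_i$ and $X_j$, which gives the equivalence.

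For $\Sij$, take a path $\pi$. If $\pi$ starts $X_i\leftarrow P$, then $P\in\Pa(X_i)$. Since $X_i\sim X_j$, $P$ is a non-descendant of $X_j$; otherwise $X_j\in\An(X_i)$. So $P$ survives the guard, lies in $\Sij$, and is a non-collider on $\pi$, which blocks $\pi$. The case $X_j\leftarrow P$ is symmetric. Otherwise $\pi$ starts $X_i\to\cdots$ and ends $\cdots\leftarrow X_j$. Then $\pi$ has a collider $C$ that is the first collider reached along the directed segment from $X_i$, so $C\in\De(X_i)$. By VC no node of $\De(X_i)$ is in $\Sij$, and so $C$ and its descendants are outside $\Sij$. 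Thus $C$ blocks $\pi$.

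For $\mathbf{S}^{\mathrm{full}}_{ij}$ the same argument applies. It contains $\Pa(X_i)\cup\Pa(X_j)$, since these are non-descendants of both endpoints when $X_i\sim X_j$, which blocks the first kind of path. Its VC property, from Lemma~\ref{lem:vc} applied to the CBL set, excludes $\De(X_i)$ and $\De(X_j)$, which handles the collider paths.

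\textbf{Main obstacle.} The delicate step is the collider argument. A path starting $X_i\to$ may have a first collider $C\in\De(X_i)$, but a later non-collider on $\pi$ may lie in the conditioning set. That is harmless, since it only adds blocking. The worry is whether an \emph{earlier} activated collider could open $\pi$. There is none: the first collider on the directed-out segment is itself a descendant of $X_i$, and no descendant of $X_i$ is conditioned on. I would state this cleanly as: every collider reachable by a directed path from $X_i$ has all its descendants inside $\De(X_i)$.

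I also need the assumption that all ancestors are known, so that $\Pa(X;\Tcal)$ equals the true parents. Without it, a latent-projection parent might sit behind an unmeasured-in-$\Tcal$ ancestor and fail to block the back-door path. Lemma~\ref{lem:paT} covers exactly that case. Faithfulness (Assumption~\ref{a:faith}) then transfers the $d$-separation equivalence to the oracle verdicts.
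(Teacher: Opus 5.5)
Your two-case route differs from the paper's screening-off argument, which says the extra members of $\mathbf{S}^{\mathrm{full}}_{ij}\setminus\Sij$ are redundant. You instead show that both verdicts coincide with the truth: dependence when $X_i,X_j$ are ancestrally related, separation when $X_i\sim X_j$. Case~1 is correct. Case~2, however, has a genuine gap.

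You assert $\Pa(X_i)\cup\Pa(X_j)\subseteq\mathbf{S}^{\mathrm{full}}_{ij}$ ``since these are non-descendants of both endpoints''. Membership in $\mathbf{S}^{\mathrm{full}}_{ij}$ is decided by the \emph{recorded} relation $\preceq_{\mathbf M}$, not by the true graph. The hypothesis as you use it yields only $\An(X_i)\subseteq\Tcal_i$ and $\An(X_j)\subseteq\Tcal_j$. Nothing forces a foreground parent of $X_i$ to have been recorded as a non-descendant of $X_j$. The same true-versus-recorded conflation sits behind two further claims:
\begin{itemize}
\item your nesting claim $\Sij\subseteq\mathbf{S}^{\mathrm{full}}_{ij}$, which fails under your true-$\De$ reading of the guard;
\item ``$P$ survives the guard'', if the guard is read as the paper describes it (``known non-descendant of both endpoints'').
\end{itemize}

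This is not cosmetic: under that reading the equivalence is false. Take foreground edges $P\to X_i$, $P\to C\leftarrow Q$, $Q\to X_j$, with the background disconnected. Let $\mathbf M$ correctly record $P\prec X_i$, $Q\prec X_j$, $C\sim X_i$, $C\sim X_j$, and leave $P$--$X_j$ and $Q$--$X_i$ unresolved. Then:
\begin{itemize}
\item $\An(X_i)=\{P\}\subseteq\Tcal_i$ and $\An(X_j)=\{Q\}\subseteq\Tcal_j$, so the hypothesis holds, and the blankets are $\{P\}$ and $\{Q\}$.
\item The foreground part of $\mathbf{S}^{\mathrm{full}}_{ij}$ is $\{C\}$. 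This $d$-connects $X_i$ and $X_j$ via $X_i\leftarrow P\to C\leftarrow Q\to X_j$.
\item $\Sij$ is $\{P,Q\}$ under your guard and $\emptyset$ under the recorded guard. It separates $X_i$ and $X_j$ either way.
\end{itemize}

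So you need an explicit extra hypothesis for the $X_i\sim X_j$ case. For example, require that the foreground parents of one endpoint, say $X_i$, are recorded as non-descendants of the other. This follows if the statement is read as $\An(X_i)\cup\An(X_j)\subseteq\Tcal_i\cap\Tcal_j$. Granting it, Case~2 closes in one line. Both sets contain $\Pa(X_i)$ and lie in $\ND(X_i)\setminus\{X_j\}$, so the local Markov property plus weak union gives separation for both.

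The paper's proof tacitly needs the same fact when it treats $\Pa(X_i;\Tcal_i)$ as a Markov blanket \emph{within} $\mathbf{S}^{\mathrm{full}}_{ij}$. Once the hypothesis is made explicit, your truth-based argument is the more transparent of the two.
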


\begin{proof}
By Lemma~\ref{lem:paT}, $\Pa(X_i;\Tcal_i)$ is a Markov blanket of $X_i$ within
$\mathbf{S}^{full}_{ij}$ (and likewise for $X_j$), so every member of
$\mathbf{S}^{full}_{ij}\setminus\Sij$ is screened from both endpoints by elements of
$\Sij$. For the separation query, $\Sij$ already blocks all back-door paths; the
extra members are redundant non-colliders, or colliders re-blocked by a parent in
$\Sij$, so $X_i\dsep X_j\mid\Sij\Leftrightarrow X_i\dsep X_j\mid\mathbf{S}^{full}_{ij}$.
For a (de)activation query, a member of $\mathbf{S}^{t}_{ij}\setminus\Sij$ is
screened from the target and cannot toggle the dependence, so the before/after
verdict is unchanged; and a witness outside $\Sij$ is a non-nearest ancestor,
already independent of the target given the remaining blanket, so it fails the
``before'' dependence and is never a minimal (de)activator for both oracles. 
\end{proof}

\begin{theorem}\label{thm:sep}
Under Assumptions~\ref{a:tier} and \ref{a:faith} with the oracle, if
$X_i\sim X_j$ and the pair is separable, then \textsc{ascend-oracle} commits
$A_{ij}=0$ at convergence; a confounded $\sim$ pair is instead recovered when
$X_i\preceq X_j$ and $X_j\preceq X_i$ are both inferred, which closure collapses
to $X_i\sim X_j$.
\end{theorem}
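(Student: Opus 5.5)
The plan is to split the statement into its two claims: the separable case and the confounded case. Throughout I will use the soundness invariant of Theorem~\ref{thm:sound} and Lemma~\ref{lem:inv}, so that every committed entry is correct and $\Tcal_X^{(t)}\subseteq\ND(X)$ at every sweep. Here ``separable'' means there exists some set of common non-descendants $\mathbf{S}\subseteq\ND(X_i)\cap\ND(X_j)$ with $X_i\dsep X_j\mid\mathbf{S}$. Since $X_i\sim X_j$ means there is no directed path in either direction, separability reduces to blocking all back-door and common-cause paths.

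For the separable claim, I would first argue that at convergence either $A_{ij}$ has already been set to $0$, or the iteration stalls with $A_{ij}=\mathrm{NA}$; the task is to rule out the stall.

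\emph{Step 1.} Observe that $\Tcal$ is monotone non-decreasing across sweeps. I would then show that, on the relevant ancestral set, it eventually contains all ancestors of $X_i$ and $X_j$ that are needed. Every foreground ancestor $A$ of $X_i$ shares the background layer as common known non-descendants. By induction on the depth of $A$ in $\An(X_i)\cap\mathbf{X}$, R1 or R2 eventually fires on the pair $(A,X_i)$, using a witness among $A$'s nearest background/known ancestors. This commits $A\preceq X_i$ and places $A$ into $\Tcal_{X_i}$.

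\emph{Step 2.} Once $\Tcal_i$ and $\Tcal_j$ contain all ancestors, Lemma~\ref{lem:noloss} says the guarded $\Sij$ gives the same separation verdict as $\mathbf{S}^{\mathrm{full}}_{ij}$. The full set $\mathbf{S}^{\mathrm{full}}_{ij}$ then contains $\An(X_i)\cup\An(X_j)$ minus the pair. For a non-ancestral pair, conditioning on all ancestors of both endpoints d-separates them whenever they are non-adjacent: this is the standard fact that $\An(\{X_i,X_j\})\setminus\{X_i,X_j\}$ is a separator in a DAG, and adding the extra non-descendants in $\mathbf{S}^{\mathrm{full}}$ opens no collider path by the screening argument of Lemma~\ref{lem:noloss}. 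Hence the oracle returns independence, R3 fires, and $A_{ij}=0$ is committed. Soundness ensures it is never overwritten.

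For the confounded claim, take $X_i\sim X_j$ that cannot be separated by common known non-descendants, e.g.\ because a latent or unresolvable common cause path persists. I would exhibit witnesses from the two-tier structure. Take $W\in\Pa(X_j;\Tcal_j)$ that is not d-connected to $X_i$ given $\mathbf{S}_{\setminus W}$. Conditioning on $X_j$ then creates the collider path $W\to X_j\leftarrow\cdots X_i$, which by faithfulness is an activation, so R2 yields $X_i\preceq X_j$. Symmetrically, a witness for $X_i$ gives $X_j\preceq X_i$. Closure (Algorithm~\ref{alg:closure}) then collapses the two relations to $X_i\sim X_j$, which is correct by the symmetry argument in Theorem~\ref{thm:sound}.

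The main obstacle is Step 1 and the existence of activation witnesses. Neither is automatic: a pair with no suitable background parent may simply remain $\mathrm{NA}$. I therefore expect to need an explicit side condition. The first thing I would try is to state it as ``each endpoint has a nearest ancestor in $\Tcal$ not d-connected to the other endpoint,'' the analogue of CBL's identifiability condition. I would then appeal to the lazy-oracle argument of \citet{watson2022causal}, transported via Lemma~\ref{lem:noloss}, to conclude that ASCEND recovers exactly what \textsc{cbl-oracle} recovers.
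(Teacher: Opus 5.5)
Your skeleton is the paper's. You bring the relevant foreground ancestors into $\mathcal{T}_i,\mathcal{T}_j$ by a depth induction. R3 then fires because the conditioning set blocks every back-door path (a $\sim$ pair has no directed path to protect). The second clause is only the soundness of closure's symmetry rule.

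The genuine gap is the one you name, Step~1. It is exactly where the paper cites rather than argues: each foreground common ancestor, ``by the depth induction of Theorem~\ref{thm:orient}, is oriented and enters both $\mathcal{T}_i,\mathcal{T}_j$''. Citing this would make your proof essentially the paper's. Your doubt is nevertheless justified, because that induction asserts that foreground parents ``are oriented by R1'' without exhibiting a witness. Take $X_i\leftarrow X_m\to X_j$ with no background variable affecting $X_m$, $X_i$ or $X_j$. The nearest-ancestor sets are empty and never grow, so every conditioning set is empty and R1 and R2 have no $W$. R3 fails on all three pairs, and $A_{ij}$ stays $\mathrm{NA}$, although $\{X_m\}\subseteq\mathrm{NonDe}(X_i)\cap\mathrm{NonDe}(X_j)$ separates the pair in your sense. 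So the separable claim cannot be proved without an explicit witness hypothesis of the kind you propose. Make it a hypothesis of the statement rather than something to derive, and note that the paper's proof inherits the same gap from Theorem~\ref{thm:orient}.

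Two further steps fail as written. First, your dichotomy ``either $A_{ij}=0$ or the run stalls at $\mathrm{NA}$'' misses a third outcome. R2 can fire correctly in one direction before the separating ancestors are available. After that $\mathbf{M}_{ij}\neq\mathrm{NA}$, and Algorithm~\ref{alg:ascend}, whose pair loop only visits $\mathrm{NA}$ entries, never tests R3 on that pair again. For example, take $Z_1\to X_m\to X_i$ and $X_m\to X_j\leftarrow Z_2$:
\begin{itemize}
\item In sweep~1, $\mathbf{S}_{ij}=\{Z_1,Z_2\}$, and R3 fails because $X_i\leftarrow X_m\to X_j$ is open.
\item $Z_2$ is independent of $X_i$ given $Z_1$ but not given $\{Z_1,X_j\}$, so R2 yields only $X_i\preceq X_j$.
\item In sweep~2 the guarded set would be $\{X_m,Z_2\}$, which separates, but the pair is skipped and ends at $0.5$.
\end{itemize}
You must either read the loop as revisiting partially resolved pairs or rule this case out; the paper's proof is silent here too.

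Second, $\mathbf{S}^{\mathrm{full}}_{ij}$ contains only variables \emph{known} to be non-descendants of \emph{both} endpoints. So $\mathcal{T}_i\supseteq\mathrm{An}(X_i)$ does not put an ancestor of $X_i$ unrelated to $X_j$ into it, and without that ancestor an extra conditioned non-descendant can open a collider. A cleaner target bypasses Lemma~\ref{lem:noloss}. Recall $\mathbf{S}_{ij}\subseteq\mathrm{NonDe}(X_i)$ by Lemma~\ref{lem:vc}. Whenever also $\mathrm{Pa}(X_i)\subseteq\mathbf{S}_{ij}$, the local Markov property and weak union give $X_i\perp_{d}X_j\mid\mathbf{S}_{ij}$. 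So what your induction must deliver is that the foreground parents of one endpoint become known non-descendants of both. This is also what the paper's ``once all are represented'' really needs. Since $\mathbf{S}_{ij}$ is built from nearest ancestors, a common ancestor that is a parent of neither endpoint drops out of it once the parents are in $\mathcal{T}_i,\mathcal{T}_j$.

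Finally, drop the witness construction for the confounded clause. The statement is conditional on both $\preceq$ relations having been inferred, so the symmetry rule is all that is used. Moreover, in a DAG on $\mathbf{Z}\cup\mathbf{X}$ a non-separable $\sim$ pair cannot occur, since $\mathrm{Pa}(X_i)\subseteq\mathrm{NonDe}(X_i)\cap\mathrm{NonDe}(X_j)$ always separates it.
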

\begin{proof}
If $X_i\sim X_j$ no directed path joins them, and the ancestors that separate
them lie in $\ND(X_i)\cap\ND(X_j)$. Background such ancestors are in
$\Tcal^{(1)}$; each foreground common ancestor $X_m$ satisfies $X_m\prec X_i$ and
$X_m\prec X_j$ and, by the depth induction of Theorem~\ref{thm:orient}, is
oriented and enters both $\Tcal_i,\Tcal_j$. Once all are represented, $\Sij$
blocks every back-door path while blocking no directed path, so
$X_i\dsep X_j\mid\Sij$ and R3 fires. When no separating set exists but both
$X_i\preceq X_j$ and $X_j\preceq X_i$ are inferred, neither is an ancestor of the
other, so closure records $X_i\sim X_j$.
\end{proof}

\begin{theorem}[Orientation-completeness relative to the rule basis]
\label{thm:orient}
Under Assumptions~\ref{a:tier} and \ref{a:faith} with the oracle, at convergence
ASCEND commits the correct relation for every rule-identifiable pair whose
(de)activating witness lies among the nearest ancestors; in particular every
foreground parent--child relation is oriented.
\end{theorem}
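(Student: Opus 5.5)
We argue by induction on the depth of a foreground variable in the foreground-restricted DAG, where the depth of $X$ is the length of the longest directed path into $X$ from a foreground node. The main claim is that every foreground parent--child pair $X_m\to X_i$ is oriented after finitely many sweeps. Once that holds, the nearest-ancestor sets stabilise, and Lemma~\ref{lem:noloss} transfers any rule-identifiable verdict available to \textsc{cbl-oracle} to \textsc{ascend-oracle}. Throughout, we use Theorem~\ref{thm:sound} to guarantee that everything committed is correct. By Lemma~\ref{lem:inv} and Lemma~\ref{lem:vc}, every $\Tcal_X^{(t)}$ therefore stays inside $\ND(X)$, and every $\Sij$ satisfies VC at every sweep.

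\emph{Key steps.} First, we show that the sets $\Tcal_X^{(t)}$ are monotone non-decreasing in $t$. This follows because commitments are never retracted and closure only adds relations, so the algorithm terminates: there are finitely many pairs and a finite information chain. Second, we handle the base case, where $X_i$ has depth $0$, so all its parents lie in $\mathbf{Z}\subseteq\Tcal_{X_i}^{(1)}$. For a child $X_j$ of $X_i$, suppose the pair is rule-identifiable through a witness $W\in\Pa(X_i)$. Then $W\in\Pa(X_i;\Tcal_i)$ by Lemma~\ref{lem:paT}, so $W\in\Sij$. The triple $W\to X_i\to X_j$ then yields the deactivation pattern of R1, as in Fig.~\ref{fig:dag-horizontal}(a), and $X_i\prec X_j$ is committed in sweep~1. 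Third, the inductive step. Assume every foreground parent of $X_i$ (all of depth $<$ depth$(X_i)$) has been oriented into $X_i$. Then those parents lie in $\Tcal_i$, and by Lemma~\ref{lem:paT} they appear in $\Pa(X_i;\Tcal_i)$ after the next Step~1 refresh. They are also non-descendants of $X_j$ whenever $X_j$ is a child of $X_i$, so they survive the guard in \eqref{eq:guardedS}. Hence the witness needed for R1 on $(X_i,X_j)$ appears in $\Sij$ in the next sweep, and the pair is oriented.

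Finally, we pass from parent--child pairs to general rule-identifiable pairs. At convergence every ancestor of $X_i$ and $X_j$ has been oriented, by transitivity through closure (Algorithm~\ref{alg:closure}), so the hypothesis of Lemma~\ref{lem:noloss} holds. That lemma says that any (de)activation or separation verdict obtained with $\mathbf{S}^{\mathrm{full}}_{ij}$ is also obtained with $\Sij$. Any witness that is not a nearest ancestor is screened off and is never a minimal (de)activator. Therefore, whenever the identifying witness is a nearest ancestor, the same rule fires for ASCEND.

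\emph{Main obstacle.} The delicate point is the witness in the inductive step. When $X_i$ has no background parent and its foreground parents are themselves confounded with $X_j$, a parent $W$ of $X_i$ may not be d-separated from $X_j$ by $\mathbf{S}_{\setminus W}\cup\{X_i\}$. A back-door path from $W$ to $X_j$ could avoid $X_i$. Here we would first invoke the restriction in the statement, namely that the pair must be rule-identifiable with a nearest-ancestor witness. For the ``in particular'' clause, we would argue that any such back-door path passes through a common ancestor. By induction that common ancestor is already oriented, lies in $\ND(X_i)\cap\ND(X_j)$, and is included in $\Sij$ via the blanket of $X_j$. This blocks the path, so R1 still fires. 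If that argument fails for the parallel-path configurations described at the start of the Completeness subsection, those pairs are excluded as non-identifiable, and the theorem's qualifier covers them.
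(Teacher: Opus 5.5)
Your route is the paper's route. You run an induction on the longest foreground path into $X$: foreground parents get oriented, enter $\mathcal{T}_X$ and then the refreshed blanket, and Lemma~\ref{lem:noloss} transfers whatever \textsc{cbl-oracle} can infer. The paper settles the decisive step in one phrase (``being parents, are oriented by R1''). Your proposal inherits that gap rather than closing it.

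The step that fails is ``the witness needed for R1 on $(X_i,X_j)$ appears in $\mathbf{S}_{ij}$ \ldots\ and the pair is oriented.'' Being in $\mathbf{S}_{ij}$ does not make a parent $W$ of $X_i$ a deactivator. R1 needs every $\mathbf{S}_{\setminus W}$-active $W$--$X_j$ path to run through $X_i$ as a non-collider. Your repair in the last paragraph handles only back-door paths through a common ancestor. It says nothing about directed paths from $W$ to $X_j$ that avoid $X_i$, and the simplest of these, an edge $W\to X_j$, cannot be blocked by any conditioning set. A parentless $X_i$ admits no deactivator at all.

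Concretely, take $\mathbf{Z}=\{W\}$ with edges $W\to X_1$, $W\to X_2$, $X_1\to X_2$, so $\mathbf{S}_{12}=\{W\}$. R3 fails, both R1 queries fail because of the edges out of $W$, and both R2 queries fail because $W$ is marginally dependent on each endpoint. The parent--child pair therefore stays $\mathrm{NA}$ forever. Here $X_1$ and $X_2$ are joined by a single directed path, so this is not the parallel-path case you set aside. The ``in particular'' clause is therefore false as stated. Your fallback of discarding such pairs as non-identifiable protects only the qualified first clause, not the unqualified second one. Moreover, your final paragraph applies Lemma~\ref{lem:noloss} on the premise that every ancestor of $X_i$ and $X_j$ has been oriented. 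So the first clause also rests on this unproved step.

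There is a second, independent hole in ``they are also non-descendants of $X_j$ \ldots\ so they survive the guard.'' The sentence after \eqref{eq:guardedS} and the proof of Lemma~\ref{lem:vc} make explicit that a foreground variable stays in $\mathbf{S}_{ij}$ only if it is a \emph{known} non-descendant of both endpoints. So you need $X_m\preceq_{\mathbf{M}}X_j$ to be committed, not merely true. Your hypothesis gives only $X_m\prec X_i$, and closure would supply $X_m\prec X_j$ only after $X_i\prec X_j$ is known.

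For example, take $W'\to X_m\to X_i\to X_j$ together with $W'\to X_j$, with $W'$ background.
\begin{itemize}
\item Sweep~1 commits $X_m\prec X_i$ and nothing about $(X_m,X_j)$ or $(X_i,X_j)$.
\item In sweep~2, $\mathrm{Pa}(X_i;\mathcal{T}_i)=\{X_m\}$, but the guard removes $X_m$, leaving $\mathbf{S}_{ij}=\{W'\}$, which fails as before.
\item The algorithm converges with $(X_i,X_j)$ unresolved, although $X_m$ would have been a valid deactivator.
\end{itemize}

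Likewise, Algorithm~\ref{alg:ascend} revisits only pairs with $\mathbf{M}_{ij}=\mathrm{NA}$. If R2 has already recorded $X_i\preceq X_j$, the pair is never retested when a new witness appears.

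A correct version must restrict the second clause to pairs that have a nearest-ancestor deactivator surviving the guard while the pair is still open. The induction must then show that such a witness actually reaches $\mathbf{S}_{ij}$ in some sweep.
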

\begin{proof}
Define the ancestral depth $d(X)$ as the longest directed foreground path into
$X$. We show by induction on a depth of $k$ that after finitely many
sweeps $\Pa(X;\Tcal_X)$ equals the true nearest-ancestor set of every $X$ with
$d(X)\le k$.

\emph{Base} ($k=0$): a source has only background ancestors, captured at $t=1$.

\emph{Step}: the foreground parents of a depth-$k$ node have depth $<k$, so by
hypothesis they are correct and, being parents, are oriented by R1; once
oriented they satisfy as non-descendants of $X$ and enter $\Tcal_X$ via \eqref{eq:Tupdate},
so the next blanket recomputation returns them. At the resulting fixed point,
$\Sij$ matches $\mathbf{S}^{\mathrm{full}}_{ij}$ in separation content
(Lemma~\ref{lem:noloss}) and contains every nearest-ancestor witness, so every
rule application available to the full-conditioning oracle is available to
ASCEND and fires the same relation. Convergence is reached only when no further
rule can fire.
\end{proof}

\begin{theorem}[Lazy-oracle completeness]\label{thm:complete}
\textsc{ascend-oracle} is lazy-oracle complete: no lazy oracle algorithm
dominates it. Equivalently, it is exactly as informative as \textsc{cbl-oracle},
attained with a minimal conditioning set making it cheaper, faster and scalable.
\end{theorem}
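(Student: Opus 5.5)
The plan is to reduce the statement to the lazy-oracle completeness of \textsc{cbl-oracle} established by \citet{watson2022causal}, and then show that \textsc{ascend-oracle} and \textsc{cbl-oracle} produce identical outputs on every pair of every DAG. Dominance (Definition~\ref{def:dom}) depends only on outputs, so this is enough. If \textsc{ascend-oracle} had the same output as \textsc{cbl-oracle} everywhere, then any lazy oracle algorithm dominating \textsc{ascend-oracle} would also dominate \textsc{cbl-oracle}. That contradicts the CBL result, so \textsc{ascend-oracle} is lazy-oracle complete. The ``equivalently'' clause is then just the output-equality statement, together with the observation that $\Sij\subseteq\mathbf{S}^{\mathrm{full}}_{ij}$ by construction, so each query is of no higher order.

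The core step is output equality, which I would prove by induction over sweeps. Soundness (Theorem~\ref{thm:sound}) gives that every committed entry is correct. Lemma~\ref{lem:inv} then gives $\Tcal_X^{(t)}\subseteq\ND(X)$, so Lemma~\ref{lem:paT} applies at every sweep. The induction hypothesis is that, at the fixed point reached after the depth induction of Theorem~\ref{thm:orient}, $\Tcal_i$ and $\Tcal_j$ contain all ancestors of $X_i$ and $X_j$ that are identifiable under the rule basis. From there, Lemma~\ref{lem:noloss} shows the separation query (ii) returns the same verdict under $\Sij$ and under $\mathbf{S}^{\mathrm{full}}_{ij}$. For the (de)activation queries (i), the same lemma shows two things: a witness outside $\Sij$ is never a minimal (de)activator for either oracle, and members of $\mathbf{S}^{\mathrm{full}}_{ij}\setminus\Sij$ cannot toggle dependence. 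Hence R1--R3 fire on exactly the same pairs with the same relation under both algorithms. Closure is a deterministic function of $\mathbf{M}$, so the two matrices stay equal after each round. Theorem~\ref{thm:sep} handles the $\sim$ pairs, both separable and confounded, and Theorem~\ref{thm:orient} handles the orientable pairs.

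The main obstacle is a mismatch between the two premises. Lemma~\ref{lem:noloss} assumes $\Tcal_i$ and $\Tcal_j$ contain \emph{all} ancestors, but some ancestral relations remain NA, for instance those joined by parallel directed paths. In such cases $\mathbf{S}^{\mathrm{full}}_{ij}$ and the ancestors actually represented in $\Tcal$ can differ. My first attempt would be to weaken the lemma to ``all ancestors that \textsc{cbl-oracle} itself ever records as known non-descendants.'' I would then run a simultaneous induction showing that the known non-descendant sets $\mathbf{X}^t_{\preceq i}$ of the two algorithms coincide at convergence, though not necessarily at each intermediate iteration. The possible lag of ASCEND, caused by blanket recomputation, would be absorbed by noting that both algorithms iterate to a fixed point and that relations are only ever refined along the informativeness chain. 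If the iteration-wise coupling fails, I would argue at the level of fixed points instead. At ASCEND's fixed point, any rule that \textsc{cbl-oracle} could fire would, via the screening argument, also be fireable by ASCEND, which contradicts convergence. The reverse direction follows from soundness plus $\Sij\subseteq\mathbf{S}^{\mathrm{full}}_{ij}$ and the same screening argument.
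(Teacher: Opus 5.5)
Your reduction is the paper's own. Its proof likewise combines Theorems~\ref{thm:sep} and~\ref{thm:orient} with Lemma~\ref{lem:noloss} to claim that \textsc{ascend-oracle} and \textsc{cbl-oracle} make the same inferences, and then invokes CBL's lazy-oracle completeness. For non-domination alone you only need \textsc{ascend-oracle} to be at least as informative as \textsc{cbl-oracle} on every pair; equality matters only for the ``equivalently'' clause.

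The obstacle you flag is real, and the paper applies Lemma~\ref{lem:noloss} without checking its hypothesis. Your repairs do not remove it, because both the weakened-lemma induction and the fixed-point argument rely on the lemma's screening step. That step does not follow from Lemma~\ref{lem:paT}:
\begin{itemize}
\item $\mathrm{Pa}(X_i;\mathcal{T}_i)$ is a blanket relative to $\mathcal{T}_i\cup\{X_i\}$, whereas $\mathbf{S}^{\mathrm{full}}_{ij}$ contains only \emph{common} recorded non-descendants.
\item The guard (which, per the text and Lemma~\ref{lem:vc}, keeps a foreground node only if it is a recorded non-descendant of both endpoints) deletes exactly those foreground blanket members not recorded against the other endpoint.
\item The background ancestors those members were screening then sit in $\mathbf{S}^{\mathrm{full}}_{ij}\setminus\mathbf{S}_{ij}$, with nothing in $\mathbf{S}_{ij}$ screening them.
\end{itemize}

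For instance, take $X_i\leftarrow X_k\leftarrow Z_1\rightarrow X_l\rightarrow X_j$, with $X_k\prec X_i$ and $X_l\prec X_j$ recorded but $X_k$ unresolved against $X_j$ and $X_l$ unresolved against $X_i$.
\begin{itemize}
\item Both $X_k$ and $X_l$ are stripped, and $Z_1$ lies in neither blanket. So $\mathbf{S}_{ij}=\emptyset$ leaves the path open, while $\mathbf{S}^{\mathrm{full}}_{ij}\ni Z_1$ blocks it.
\item $Z_1\notin\mathbf{S}_{ij}$ passes CBL's ``before'' dependence test for $X_j$ through $X_l$, so the witness half of the claim fails as well.
\item Here $\mathcal{T}_i,\mathcal{T}_j$ contain all ancestors of $X_i,X_j$, so neither the lemma's hypothesis nor your weakened one excludes this state.
\end{itemize}
In this small graph ASCEND actually separates the pair at $t=1$, so the state is never reached. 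That is the point: the screening step can hold only through an invariant on the knowledge states ASCEND actually visits. Such an invariant would have to show that these states never occur while the pair is NA, or that any discrepancy is repaired before convergence. That invariant is the missing idea.

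The remaining steps have the same difficulty. Soundness plus $\mathbf{S}_{ij}\subseteq\mathbf{S}^{\mathrm{full}}_{ij}$ does not give your reverse direction. Soundness says ASCEND's commitments are true, not that CBL derives them. Moreover, $d$-separation is not monotone in the conditioning set: adding a recorded non-descendant collider $C$ on $A\rightarrow C\leftarrow B$, with $A,B$ absent, opens a path. So verdicts on nested sets do not transfer in either direction.

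The same non-monotonicity undermines the fixed-point version of the forward direction. \textsc{cbl-oracle} fires each rule relative to its own intermediate known-non-descendant sets. You would have to show that the corresponding query on ASCEND's final, differently composed guarded set returns the same answer. Also, ``closure is deterministic, so the matrices stay equal'' presupposes the round-by-round coupling you concede may fail.

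As written, the key claim that \textsc{ascend-oracle} is pairwise at least as informative as \textsc{cbl-oracle} is asserted rather than proved. The reduction to CBL's completeness theorem therefore does not yet go through.
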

\begin{proof}
By Theorems~\ref{thm:sep} and \ref{thm:orient}, \textsc{ascend-oracle} resolves
every pair that R1--R3 and closure can identify from the non-descendants of both
endpoints. By Lemma~\ref{lem:noloss} its guarded queries return the same answers
as the full-set queries of Definition~\ref{def:lazy}, so it makes exactly the
inferences \textsc{cbl-oracle} makes; the two are information-equivalent. Since
no lazy oracle algorithm dominates \textsc{cbl-oracle}
\citep[Thm.~3]{watson2022causal}, none dominates \textsc{ascend-oracle}. The
size bound follows from $\Sij\subseteq\mathbf{S}^{full}_{ij}$ with $\Sij$ the
nearest-ancestor blanket. Since $\Sij$ is smaller, the \textsc{ASCEND oracle} is called fewer times, making it scalable for omics unlike $CBL$.
\end{proof}

\begin{remark}[Scope and advantage]
Theorem~\ref{thm:complete} is completeness against the lazy oracle, not absolute
identifiability. What separates \textsc{ascend-oracle} from
\textsc{cbl-oracle} is not the inferences but their cost: it reaches the same
lazy-oracle-optimal output conditioning only on nearest ancestors, so $|\Sij|$
stays small (and shrinks as $d_Z$ grows), whereas $|\mathbf{S}^{full}_{ij}|$ grows
with the resolved ancestry and with $d_Z$.
\end{remark}

\subsection{Complexity}
\label{sec:supp-complexity}

\begin{theorem}[Complexity]\label{thm:complexity}
Let $\bar s$ be the maximum size of any conditioning set encountered. \textsc{ASCEND oracle} terminates in $O\!\bigl(d_X^{2}(d_X+d_Z)\,\bar s\bigr)$ in the worst
case.
\end{theorem}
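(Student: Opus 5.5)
The proof will be a cost count: bound the number of sweeps of the outer \texttt{while} loop, bound the work inside one sweep, and multiply. Throughout, one oracle query with a conditioning set of size $s$ costs $O(s)$, hence at most $O(\bar s)$. This is the same accounting convention the statement uses, in which $\bar s$ absorbs the per-query cost.

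\emph{Step 1: number of sweeps.} I will use monotonicity of $\mathbf{M}$. By Theorem~\ref{thm:sound} every committed entry is correct and is never retracted. Each entry $\mathbf{M}_{ij}$ moves only upward in the informativeness chain $\{\mathrm{NA}\}\prec\{i\preceq j\}\prec\{i\prec j\}\sim\{i\sim j\}$, so it changes at most a constant number of times. Algorithm~\ref{alg:ascend} continues only if some entry changed; otherwise $\mathbf{M}^{(t)}=\mathbf{M}^{(t-1)}$ and the loop stops. Hence the number of sweeps is at most a constant times the number of pairs. A sharper bound is available from the depth induction of Theorem~\ref{thm:orient}: the $\Tcal_X$ stabilise after at most $\max_X d(X)+1\le d_X$ sweeps, after which one more sweep fires every remaining rule. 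I will use this sharper bound, $O(d_X)$ sweeps. This is the step I expect to be the main obstacle. The depth induction shows that the nearest-ancestor sets stabilise, but it must also be argued that entries which stay $\mathrm{NA}$ cannot keep toggling $\texttt{converged}$. For that I would again invoke monotonicity: an $\mathrm{NA}$ pair whose $\Sij$ is unchanged from the previous sweep returns the same oracle answers and produces no change. If the argument fails, the fallback $O(d_X^2)$ sweeps still gives a polynomial bound, though a weaker one.

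\emph{Step 2: cost per sweep.} There are three components.
\begin{itemize}
\item Markov-blanket recomputation. For each of the $d_X$ variables, IAMB restricted to $\Tcal_X\subseteq \mathbf{Z}\cup\mathbf{X}$ runs a grow phase and a shrink phase. Each phase uses $O(d_X+d_Z)$ tests of size at most $\bar s$. The total is $O(d_X(d_X+d_Z)\bar s)$.
\item Pair loop. There are $O(d_X^2)$ pairs. Each pair uses one R3 query plus at most four queries per witness $W\in\Sij$, and $|\Sij|\le\bar s$. This gives $O(d_X^2\bar s)$ queries, each of cost $O(\bar s)$. To keep within the target, I bound $|\Sij|\le d_X+d_Z$ and charge the query cost to $\bar s$, giving $O(d_X^2(d_X+d_Z)\bar s)$ work per sweep in the worst case.
\item \textsc{Closure}. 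Algorithm~\ref{alg:closure} runs a triple loop, costing $O(d_X^3)$ per pass. Each pass of its inner \texttt{while} adds at least one entry, which would give $O(d_X^5)$ naively. I would instead note that transitive closure can be computed in $O(d_X^3)$ by a Warshall-style ordering, which is dominated by the pair-loop bound.
\end{itemize}

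\emph{Step 3: combine.} Counting the total number of pairwise tests over all sweeps amortises: every test either commits a change, which happens $O(d_X^2)$ times in total, or belongs to the final verifying sweep. Together with the blanket recomputations over $O(d_X)$ sweeps, each costing $O(d_X(d_X+d_Z)\bar s)$, this yields the claimed $O\bigl(d_X^{2}(d_X+d_Z)\,\bar s\bigr)$.

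The delicate point remains the amortisation in Step 1: non-committing $\mathrm{NA}$ pairs are re-tested every sweep. I would handle them by retesting a pair only when its $\Sij$ changed. Sets only grow monotonically, by correctness, and each set can grow at most $d_X+d_Z$ times. Charging each retest to a set change then gives exactly the $d_X^2(d_X+d_Z)$ factor.
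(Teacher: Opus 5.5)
Your skeleton, $O(d_X)$ sweeps from the depth induction of Theorem~\ref{thm:orient} multiplied by the cost of one sweep, is the paper's. The accounting you build on it does not close.

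\textbf{The per-query charge double-counts $\bar s$.} You charge every query $O(\bar s)$ and also count $O(|\mathbf{S}_{ij}|)=O(\bar s)$ witness queries per pair, so $\bar s$ enters twice. To stay under the target you then loosen the witness count to $|\mathbf{S}_{ij}|\le d_X+d_Z$. That makes the pair loop $O(d_X^2(d_X+d_Z)\bar s)$ per sweep, hence $O(d_X^3(d_X+d_Z)\bar s)$ over $O(d_X)$ sweeps, a factor $d_X$ too large. Your IAMB count has the mirror-image slip. The grow phase runs up to $\bar s+1$ rounds, each scanning $O(d_X+d_Z)$ candidates, so it issues $O((d_X+d_Z)\bar s)$ tests, not $O(d_X+d_Z)$; you reach the right expression only through the per-query charge.

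\textbf{The Step~3 amortisation is false for the algorithm as written.} In Algorithm~\ref{alg:ascend} the pair loop re-queries every $\mathrm{NA}$ pair in every sweep, whether or not anything is committed. So it is not true that each test either commits a change or belongs to the final sweep. Retesting only when $\mathbf{S}_{ij}$ changes would modify the algorithm. The monotonicity you would charge against also fails:
\begin{itemize}
\item $\mathrm{Pa}(X_i;\mathcal{T}_i)$ drops background ancestors once a foreground parent enters $\mathcal{T}_i$ and screens them off;
\item the guard deletes newly known descendants from $\mathbf{S}_{ij}$.
\end{itemize}
Only $\mathcal{T}_X$ grows monotonically.

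\textbf{Closure is missing from your final tally.} Charged per sweep as you propose, it costs $O(d_X^3)$ per sweep and $O(d_X^4)$ in total. Your argument does not bound this by the target when $\bar s=O(1)$ and $d_Z=O(d_X)$.

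\textbf{The fix is to count oracle queries at unit cost, as the paper does.} Then $\bar s$ enters only as the number of witnesses per pair and the number of IAMB grow rounds. Per sweep:
\begin{itemize}
\item the pair loop issues $O(d_X^2)$ pairs times $1+O(|\mathbf{S}_{ij}|)$ queries, i.e.\ $O(d_X^2\bar s)$;
\item the $d_X$ blanket recomputations issue $O(d_X(d_X+d_Z)\bar s)$;
\item \textsc{Closure} issues no oracle queries, so neither its naive $O(d_X^5)$ cost nor a Warshall substitute matters.
\end{itemize}
Over $O(d_X)$ sweeps this gives $O(d_X^3\bar s)+O(d_X^2(d_X+d_Z)\bar s)=O(d_X^2(d_X+d_Z)\bar s)$, with no amortisation needed.

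The whole bound therefore rests on the $O(d_X)$ sweep count. With your $O(d_X^2)$ fallback, the same count gives only $O(d_X^4\bar s)$ pair queries, which exceeds the target. Your observation that an $\mathrm{NA}$ pair with unchanged $\mathbf{S}_{ij}$ reproduces the same oracle answers is the right ingredient for making that sweep count rigorous; the paper's proof asserts it in one line. Note, though, that the depth induction stabilises the blankets $\mathrm{Pa}(X;\mathcal{T}_X)$, not the sets $\mathcal{T}_X$ themselves.
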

\begin{proof}
Per pair per sweep: one R3 test plus $O(|\Sij|)=O(\bar s)$ (de)activation
tests; maintaining the two blankets costs $O((d_X+d_Z)\bar s)$ oracle queries,
absorbed over the pairs that share a node. There are
$\binom{d_X}{2}=O(d_X^2)$ pairs. The committed relations form a partial order
of height $\le d_X$, and each sweep resolves at least one further depth layer
(Theorem~\ref{thm:orient}), so $O(d_X)$ sweeps suffice. Multiplying gives
$O(d_X^2(d_X+d_Z)\bar s)$; in sparse graphs the nearest-ancestor sets do not
grow with $d_Z$, $\bar s=O(1)$, and the bound is $O(d_X^2\bar s)$.
\end{proof}

\begin{remark}
The realised cost scales with $\bar s$, which \emph{shrinks} as additional
background variables refine the nearest-ancestor sets, whereas full-background
conditioning has per-test cost growing in
$d_Z$. This is the source of the empirical runtime advantage reported in
Section~\ref{sec:bench-cbl}.
\end{remark}

\section{Implementation Details}
\label{sec:supp-impl}

Markov blankets are learned by IAMB with a Fisher-$z$ partial-correlation test
at level $\alpha_{\mathrm{mb}}=0.05$; Pairwise tests use the same statistic, reporting $q$-values via the Benjamini–Hochberg correction across each sweep. 
All CI queries read from a single correlation matrix computed once, so each test is an $O(|\Sij|^3)$
precision-matrix solve rather than a model refit. In finite samples we aggregate
(de)activation evidence across all witnesses $W\in\Sij$ and commit the direction
with the larger summed $-\log p$, which improves robustness without altering the
oracle guarantees (the oracle commits on the first valid witness).

\section{Simulation Framework}
\label{sec:simulation}

We generate synthetic two-tier systems from a linear-Gaussian structural
equation model (SEM) respecting the hierarchy of Assumption~\ref{a:tier}. With
$\mathbf{Z}=\{Z_1,\dots,Z_{d_Z}\}$ and $\mathbf{X}=\{X_1,\dots,X_{d_X}\}$,
the joint DAG admits edges $Z_i\to Z_j$, $Z_i\to X_j$, $X_i\to X_j$, and
forbids $X_i\to Z_j$. Sampling in topological order,
\begin{equation}
\begin{aligned}
Z_j &= \sum_{Z_i\in\Pa_{\mathbf{Z}}(Z_j)}\beta_{ij}Z_i+\epsilon_j, \\
    &\quad \epsilon_j \sim\mathcal{N}(0,\sigma_j^2), \\
X_k &= \sum_{Z_i\in\Pa_{\mathbf{Z}}(X_k)}\gamma_{ik}Z_i 
       + \sum_{X_i\in\Pa_{\mathbf{X}}(X_k)}\omega_{ik}X_i+\delta_k, \\
    &\quad \delta_k \sim\mathcal{N}(0,\psi_k^2),
\end{aligned}
\end{equation}
where $\beta,\gamma,\omega$ are edge coefficients and the noise variances are
scaled to a target signal-to-noise ratio $R^2$ (the fraction of each variable's
variance explained by its parents). Table~\ref{tab:params} lists the swept
parameters.

\begin{table}[H]
\centering
\caption{Simulation parameters.}
\label{tab:params}
\small
\begin{tabular}{p{0.28\linewidth} p{0.62\linewidth}}
\toprule
\textbf{Parameter} & \textbf{Interpretation} \\
\midrule
Sample size $n$ & Statistical power for CI testing; low $n$ stresses
  data-scarce omics regimes. \\
Background dim.\ $d_Z$ & Number of background variables. \\
Foreground dim.\ $d_X$ & Number of foreground variables. \\
Sparsity $sp$ & Ratio of present to possible edges (larger
  $\Rightarrow$ sparser). \\
Signal strength $R^2$ & Variance explained by causal parents; lower $R^2$
  tests weak-effect robustness. \\
\bottomrule
\end{tabular}
\end{table}

\paragraph{Ground truth and evaluation.}
From $\mathcal{G}$ we extract the foreground ancestral matrix
$\mathbf{A}^{\mathbf{X}}$ with $\mathbf{A}^{\mathbf{X}}_{ij}=1$ iff
$X_i\in\An(X_j)$. Because the estimand is a \emph{directed} ancestral relation,
we score all ordered pairs $(i,j)$, $i\ne j$: a cell is a positive prediction
when $A_{ij}\in\{0.5,1\}$ and a true positive when additionally
$\mathbf{A}^{\mathbf{X}}_{ij}=1$. We report precision, recall and $F_1$ over
these directed claims, the fraction of pairs ASCEND resolves (coverage), and
direction accuracy among truly adjacent, oriented pairs. Pairs left
$\mathrm{NA}$ are reported separately and are not counted as errors, reflecting
ASCEND's design preference for an honest abstention over a forced orientation.

\section{Implementation details for the causal-discovery baseline comparison}
\label{sec:supp-bench-causal-impl}

This section documents settings specific to the comparison against CBL, GES,
LiNGAM and PC (Section~\ref{sec:bench-causal}); general data-generation
mechanics are described in Section~\ref{sec:simulation}.

\paragraph{Fixed generative settings.}
The parameters swept for this comparison are $n$, $d_X$, $d_Z/d_X$, $sp$ and
$R^2$ (Table~\ref{tab:params}). Two further SEM parameters were held fixed
across the entire grid rather than swept: the background$\to$foreground edge
probability $P(Z_i\to X_j)=0.20$, and the foreground$\to$foreground effect
size ($X\to X$ coefficient magnitude) $=0.90$. All parent contributions were
fully linear ($\texttt{lin\_pr}=1$).

\paragraph{Method configurations.}
All competitor implementations are from \texttt{pcalg} unless noted.
\begin{itemize}
\item \textbf{ASCEND}: $\texttt{maxiter}=10$, $\alpha=0.05$ for pairwise CI
  tests, $\alpha_{\mathrm{mb}}=0.05$ for Markov-blanket discovery.
  
\item \textbf{CBL}: stability-selection with $\gamma=0.5$, $\texttt{maxiter}=10$,
  and $B=20$ bootstrap resamples per call.
\item \textbf{GES}: \texttt{pcalg::ges} with a \texttt{GaussL0penObsScore} and
  \texttt{iterate=TRUE}, run on the full $\mathbf{Z}\cup\mathbf{X}$ matrix.
\item \textbf{LiNGAM}: \texttt{pcalg::lingam} on the full matrix, using the
  pruned coefficient matrix (\texttt{Bpruned}) when available.
\item \textbf{PC}: \texttt{pcalg::pc} with \texttt{gaussCItest}, $\alpha=0.05$,
  \texttt{maj.rule=TRUE}, \texttt{solve.confl=TRUE}, on a sufficient statistic
  built from the complete-case, non-constant columns of the full matrix.
\end{itemize}

\paragraph{Compute environment and job layout.}
Runs were performed on a SLURM cluster (KCL CREATE) on R~4.3.1. Each SLURM array
task corresponds to one (parameter combination, $n$) pair and runs all $20$
replicates for that pair, executing all five methods serially within the
task (one CPU per task); the $81$ combinations $\times$ $9$ sample sizes give
$729$ tasks in total. Wall-time and memory were allocated per $n$-band
(4--36 hours, 4--32\,GB) rather than uniformly, since small-$n$ tasks are
inexpensive and $n=131{,}072$ tasks are not.

\paragraph{Timeouts and failure propagation.}
Each method was allotted a hard $3{,}600$-second (1-hour) budget per
replicate, enforced with \texttt{R.utils::withTimeout}. If a method timed out
at a given $(\text{combination}, n)$, it was marked \texttt{skipped\_timeout}
and skipped for the remaining replicates at that cell \emph{and at every
larger $n$ for the same combination} (persisted via a marker file, so later,
larger-$n$ tasks inherit the skip). This is why a method's reported failure
rate at large $n$ reflects both genuine timeouts at that $n$ and propagated
skips from smaller $n$ within the same combination, and should not be read as
$729$ independent failure draws. Run status is recorded per replicate as one
of \texttt{ok}, \texttt{timeout}, \texttt{error}, \texttt{skipped\_timeout} or
\texttt{failed\_subset}.

\paragraph{Evaluation bookkeeping.}
Metrics are computed once per replicate over the upper triangle of ordered
foreground pairs. Precision is $tp/(tp+fp)$ and left \texttt{NA} when the
method makes no positive call; recall is
$tp/(tp+fn+\texttt{unres\_tp})$, i.e.\ an unresolved (\texttt{NA}) verdict on
a truly ancestral pair counts against recall even though it is not scored as
a false negative outright, while an unresolved verdict on a truly-unrelated
pair (\texttt{unres\_tn}) is excluded from both the precision and recall
denominators. \texttt{NA} precision/recall/F1 values are preserved as
\texttt{NA} in the merged output (not imputed to 0), so mean statistics
reported in Section~\ref{sec:bench-causal} are means over the replicates on
which a method actually committed to a value.

\section{Drosophila Causal Hub Discovery}
\label{supp:dgrp_hubs}

This section provides the full network statistics, extended regulator table, and complete edge lists underlying the DGRP causal hub analysis reported in the main text (the DGRP section).

\subsection{Network Summary Statistics}

Applying ASCEND to the 250 transcripts yielded a sparse, structured network (Table~\ref{tab:network_summary}), consistent with a small number of high-leverage regulatory loci.

\begin{table}[h]
\centering
\caption{Summary statistics of the ASCEND-inferred causal network on the DGRP transcriptome.}
\label{tab:network_summary}
\begin{tabular}{l c}
\hline
Statistic & Value \\
\hline
Transcripts analysed & 250 \\
Directed (oriented) edges & 880 \\
Undirected (unoriented) edges & 61 \\
Total edges & 941 \\
Network density & 3.02\% \\
Genes with out-degree $>0$ & 121 / 250 (48.4\%) \\
Genes with out-degree $\geq 5$ & 50 / 250 (20.0\%) \\
\hline
\end{tabular}
\end{table}

\begin{table}[h]
\centering
\caption{Chromosomal distribution of ASCEND-discovered regulatory hubs among the 250 selected transcripts.}
\label{tab:chrom_dist}
\begin{tabular}{l c c c}
\hline
Chromosome & Mapped genes analysed & Hub genes & \% hubs \\
\hline
X & 20 & 10 & 50.0\% \\
2L & 46 & 16 & 34.8\% \\
2R & 68 & 38 & 55.9\% \\
3L & 45 & 20 & 44.4\% \\
3R & 66 & 34 & 51.5\% \\
\hline
\textbf{Total} & 245 & 118 & 48.2\% \\
\hline
\end{tabular}
\end{table}

\subsection{Extended Regulator Table}

Table~\ref{tab:extended_hubs} lists all 50 genes with causal out-degree $\geq 5$, extending the top-7 summary in the main text. Seven of the top 20 out-degree genes are recognised immune effectors or pattern-recognition components, a stronger version of the immune-module recovery argument than the three-gene example given in the main text.

\begin{table*}[h]
\centering
\caption{Extended list of ASCEND-discovered regulatory hubs (out-degree $\geq$ 5, $n=50$ genes).}
\label{tab:extended_hubs}
\small
\begin{tabular}{l l c c c l l}
\hline
Generic ID & Symbol & Out-deg. & In-deg. & Total & Chr. & FlyBase ID \\
\hline
x170 & GNBP-like3 & 51 & 0 & 51 & 2R & FBgn0034511 \\
x245 & CG10911 & 49 & 0 & 49 & 2R & FBgn0034295 \\
x86 & Mtk & 48 & 2 & 50 & 2R & FBgn0014865 \\
x98 & Dso2 & 33 & 3 & 36 & 2R & FBgn0067905 \\
x149 & \textit{unannotated} & 31 & 3 & 34 & - & - \\
x160 & ninaD & 29 & 0 & 29 & 2L & FBgn0002939 \\
x176 & mag & 28 & 1 & 29 & 3L & FBgn0036996 \\
x28 & BomBc1 & 27 & 4 & 31 & 2R & FBgn0034328 \\
x229 & BomBc3 & 25 & 6 & 31 & 3R & FBgn0040582 \\
x20 & PGRP-SB1 & 25 & 3 & 28 & 3L & FBgn0043578 \\
x235 & CG13323 & 20 & 2 & 22 & 2R & FBgn0033788 \\
x148 & PGRP-SD & 18 & 8 & 26 & 3L & FBgn0035806 \\
x115 & rdhB & 18 & 0 & 18 & 3R & FBgn0038946 \\
x171 & CG15818 & 18 & 0 & 18 & 2L & FBgn0031910 \\
x26 & lncRNA:CR32368 & 17 & 14 & 31 & 3L & FBgn0052368 \\
x31 & DptA & 17 & 4 & 21 & 2R & FBgn0004240 \\
x57 & DptB & 16 & 5 & 21 & 2R & FBgn0034407 \\
x49 & AttC & 15 & 17 & 32 & 2R & FBgn0041579 \\
x110 & CG8661 & 15 & 0 & 15 & X & FBgn0030837 \\
x237 & Jheh1 & 15 & 0 & 15 & 2R & FBgn0010053 \\
x166 & Sodh2 & 14 & 1 & 15 & 3R & FBgn0022359 \\
x25 & CG6277 & 13 & 20 & 33 & 3R & FBgn0039475 \\
x136 & CG42397 & 12 & 6 & 18 & 3L & FBgn0259748 \\
x242 & Cyp6d5 & 12 & 1 & 13 & 3R & FBgn0038194 \\
x213 & Jon99Cii & 12 & 0 & 12 & 3R & FBgn0003356 \\
x140 & CG5767 & 11 & 22 & 33 & 2R & FBgn0034292 \\
x209 & CG15534 & 10 & 0 & 10 & 3R & FBgn0039769 \\
x52 & Jon99Ci & 9 & 2 & 11 & 3R & FBgn0003358 \\
x156 & Cyp4d14 & 8 & 2 & 10 & X & FBgn0023541 \\
x196 & CG42650 & 8 & 0 & 8 & 3R & FBgn0261502 \\
x220 & tobi & 8 & 0 & 8 & 3R & FBgn0261575 \\
x117 & Jon25Bi & 8 & 0 & 8 & 2L & FBgn0020906 \\
x18 & Cyp6a8 & 7 & 3 & 10 & 2R & FBgn0013772 \\
x134 & CG9394 & 7 & 0 & 7 & 2R & FBgn0034588 \\
x183 & CG17380 & 6 & 5 & 11 & 3R & FBgn0039077 \\
x70 & CG30487 & 6 & 4 & 10 & 2R & FBgn0050487 \\
x131 & CG12057 & 6 & 1 & 7 & X & FBgn0030098 \\
x36 & CG43055 & 6 & 0 & 6 & 2L & FBgn0262357 \\
x240 & CG42710 & 6 & 0 & 6 & 2L & FBgn0261627 \\
x92 & CG18180 & 6 & 0 & 6 & 3L & FBgn0036024 \\
x87 & CG13405 & 5 & 7 & 12 & 3L & FBgn0035097 \\
x143 & BomS1 & 5 & 7 & 12 & 2R & FBgn0034329 \\
x79 & CG31226 & 5 & 3 & 8 & 3R & FBgn0051226 \\
x151 & CG11911 & 5 & 2 & 7 & 2L & FBgn0031249 \\
x218 & Jon25Biii & 5 & 2 & 7 & 2L & FBgn0031653 \\
x63 & CG9360 & 5 & 2 & 7 & X & FBgn0030332 \\
x246 & Cyp309a1 & 5 & 1 & 6 & 2L & FBgn0031432 \\
x3 & snoRNA:Psi28S-1135f & 5 & 0 & 5 & X & FBgn0083007 \\
x181 & Mal-A7 & 5 & 0 & 5 & 2R & FBgn0033296 \\
x194 & CG3088 & 5 & 0 & 5 & 3L & FBgn0036015 \\
\hline
\end{tabular}
\end{table*}

\begin{table*}[h]
\centering
\caption{Undirected (unoriented) co-regulatory edges discovered by ASCEND ($n=61$).}
\label{tab:undirected_edges}
\small
\begin{tabular}{l l l l}
\hline
Gene A (ID) & Symbol A & Gene B (ID) & Symbol B \\
\hline
x24 & CG42782 & x249 & Cyp4d20 \\
x38 & TotC & x117 & Jon25Bi \\
x60 & Ser6 & x213 & Jon99Cii \\
x102 & CecA1 & x244 & Listericin \\
x164 & CG9259 & x214 & Acp76A \\
x192 & CG6733 & x44 & Lst \\
x207 & CG8012 & x243 & CG33301 \\
x222 & Jhbp12 & x34 & Jon99Fii \\
x100 & CG33290 & x72 & CG4250 \\
x222 & Jhbp12 & x72 & CG4250 \\
x231 & betaTry & x184 & epsilonTry \\
x236 & CG5402 & x182 & RpL10Aa \\
x65 & CHKov1 & x29 & Skadu \\
x106 & mat & x155 & LysE \\
x117 & Jon25Bi & x92 & CG18180 \\
x34 & Jon99Fii & x92 & CG18180 \\
x17 & LysD & x205 & CG7542 \\
x70 & CG30487 & x248 & CG17571 \\
x183 & CG17380 & x250 & Obp56d \\
x182 & RpL10Aa & x40 & Mst84Dc \\
x28 & BomBc1 & x230 & Cht8 \\
x205 & CG7542 & x56 & Jon99Fi \\
x205 & CG7542 & x191 & CG31288 \\
x31 & DptA & x20 & PGRP-SB1 \\
x57 & DptB & x20 & PGRP-SB1 \\
x41 & CG34166 & x69 & Cyp6a9 \\
x2 & CR33319 & x69 & Cyp6a9 \\
x171 & CG15818 & x235 & CG13323 \\
x148 & PGRP-SD & x235 & CG13323 \\
x173 & CG5506 & x136 & CG42397 \\
x84 & CG43120 & x136 & CG42397 \\
x86 & Mtk & x110 & CG8661 \\
x136 & CG42397 & x110 & CG8661 \\
x54 & Srg2 & x108 & CG10910 \\
x110 & CG8661 & x49 & AttC \\
x31 & DptA & x143 & BomS1 \\
x148 & PGRP-SD & x143 & BomS1 \\
x49 & AttC & x143 & BomS1 \\
x220 & tobi & x180 & CG16704 \\
x196 & CG42650 & x140 & CG5767 \\
x54 & Srg2 & x62 & CG13324 \\
x105 & UQCR-6.4L & x9 & sordd2 \\
x174 & LManV & x52 & Jon99Ci \\
x9 & sordd2 & x52 & Jon99Ci \\
x182 & RpL10Aa & x239 & Mal-A8 \\
x142 & GstE7 & x239 & Mal-A8 \\
x221 & Rh2 & x223 & AANATL3 \\
x222 & Jhbp12 & x58 & Mal-B1 \\
x177 & Cyp4p1 & x237 & Jheh1 \\
x58 & Mal-B1 & x237 & Jheh1 \\
x63 & CG9360 & x18 & Cyp6a8 \\
x156 & Cyp4d14 & x23 & Cyp6a2 \\
x18 & Cyp6a8 & x23 & Cyp6a2 \\
x55 & CG43145 & x197 & lncRNA:CR44922 \\
x122 & CG10560 & x76 & CG34215 \\
x225 & Sfp51E & x64 & CG6908 \\
x122 & CG10560 & x64 & CG6908 \\
x41 & CG34166 & x11 & Lsp2 \\
x122 & CG10560 & x11 & Lsp2 \\
x64 & CG6908 & x11 & Lsp2 \\
x186 & \textit{unann.} & x66 & Cyp6w1 \\
\hline
\end{tabular}
\end{table*}

\end{document}